\def\address#1{\expandafter\def\expandafter\@aabuffer\expandafter
	{\@aabuffer{\affiliationfont{#1}}\relax\par
	\vspace*{13pt}}}
\def\keywords#1{\par
	\vspace*{8pt}
	{\footnotesize{\leftskip18pt\rightskip\leftskip
	\noindent{Keywords}\/:\ #1\par}}\par}
\def\ccode#1{\par
	\vspace*{8pt}
	{\footnotesize{\leftskip18pt\rightskip\leftskip
	\noindent #1\par}}\par}
\newenvironment{proof}[1][Proof]{\textbf{#1.} }{\ \rule{0.5em}{0.5em}}
\newtheorem{theorem}{Theorem}
\newtheorem{corollary}[theorem]{Corollary}
\newtheorem{definition}[theorem]{Definition}
\newtheorem{lemma}[theorem]{Lemma}
\newtheorem{proposition}[theorem]{Proposition}
\newcommand{\diag}{\operatorname{diag}}
\newcommand{\Span}{\operatorname{span}}
\newcommand{\Tr}{\operatorname{Tr}}
\newcommand{\argmin}{\operatornamewithlimits{arg \, min}}
\begin{document}


\title{Mathematical Analysis on Out-of-Sample Extensions}%
\author{Jianzhong Wang}
\address{Department of Mathematics and Statistics\\ Sam Houston State University, TX 77341, USA\\%
jzwang@shsu.edu}
\maketitle

\begin{abstract}
Let $X=\mathbf{X}\cup\mathbf{Z}$ be a data set in $\mathbb{R}^D$, where $\mathbf{X}$ is the training set and $\mathbf{Z}$ is the test one. Many unsupervised learning algorithms based on kernel methods have been developed to provide dimensionality reduction (DR) embedding for a given training set $\Phi: \mathbf{X} \to \mathbb{R}^d$ ( $d\ll D$) that maps the high-dimensional data $\mathbf{X}$ to its low-dimensional feature representation $\mathbf{Y}=\Phi(\mathbf{X})$. However, these algorithms do not straightforwardly produce DR of the test set $\mathbf{Z}$. An out-of-sample extension method provides DR of $\mathbf{Z}$ using an extension of the existent embedding $\Phi$, instead of re-computing the DR embedding for the whole set $X$. Among various out-of-sample DR extension methods, those based on Nystr\"{o}m approximation are very attractive. Many papers have developed such out-of-extension algorithms and shown their validity by numerical experiments. However, the mathematical theory for the DR extension still need further consideration.  Utilizing the reproducing kernel Hilbert space (RKHS) theory, this paper develops a preliminary mathematical analysis on the out-of-sample DR extension operators. It treats an out-of-sample DR extension operator as an extension of the identity on the RKHS defined on $\mathbf{X}$. Then the Nystr\"{o}m-type DR extension turns out to be an orthogonal projection. In the paper, we also present the conditions for the exact DR extension and give the estimate for the error of the extension.
\end{abstract}

\keywords{Out-of-sample extension; dimensionality reduction; reproducing kernel Hilbert space; Nystr\"{o}m approximation.}

\ccode{AMS Subject Classification: 62-07, 42B35, 47A58, 30C40, 35P15}

\section{Introduction}\label{sec1}
Recently, high-dimensional data---speech signals, images, videos, text documents, hand-writing letters and numbers, fingerprints, etc., become more and more popular in our real-life and in scientific and technological areas. Due to the curse of dimensionality\cite{Bellman61,ScottT83}, directly analyzing and processing high-dimensional data are very difficult and often infeasible. Therefore, \emph{dimensionality reduction} (DR)\cite{LeeV07,Wang12} is critical in high-dimensional data processing. The purpose of DR is to find a low-dimensional representation for a given high-dimensional data set, whose main features are preserved, so that the data processing can be carried out on the low-dimensional data set. There exist many DR methods in literature. The famous linear method is \emph{principle component analysis} (PCA)\cite{Jolliffe86}. From the view point of geometry, PCA is only effective when the data set nearly lies on a hyperplane. If the data set resides on a (nonlinear) manifold instead, PCA cannot effectively catch the data features. Then non-linear DR methods are employed. Among the nonlinear methods, the kernel methods (also called spectral methods) are well-developed and widely adopted. There are two types of DR kernels:  \emph{Gramian type} and \emph{Laplacian type}. The entries of a Gramian type kernel measure the similarity between data points. The eigenfunctions corresponding to large eigenvalues of such a kernel represent the main features of the data so that they span the feature space. Such a method essentially performs a PCA on the Gramian-type kernel. Therefore, in literature, it is often named as kernel PCA\cite{Schoel_98a}.  Isomaps\cite{BalasubramanianS02}, Diffusion Maps\cite{CoifmanL06}, and Spectral Clustering\cite{NgJW01,ShiM00} are belong to this category.

The DR methods of Laplacian type include locally linear embedding (LLE)\cite{RoweisS00}, local tangent space alignment (LTSA)\cite{ZhangZ04}, Hessian local linear embedding (HLLE)\cite{DonohoG03}, and Laplacian eigenmaps (Lmaps)\cite{BelkinN03a}. In these methods, the feature spaces turn to be the (numerical) null spaces of the kernels. All of kernels in these methods are normalized such that $1$ is their largest eigenvalue. It is clear that, if $K$ is a Laplacian-type kernel, then $I-K$ is a Gramian-type one. Hence, we can focus our study of out-of-example DR extension on the Gramian-type DR kernels only.

Out-of-example DR extension method finds DR of new (test) data set based on the DR of the training one.
Let the data set $\mathbf{X}\subset\mathbb{R}^D$ be a previously given data set, which is treated as a training set. Assume that a DR method produces a DR embedding $\mathfrak{F}$, which embeds $\mathbf{X}$ into $\mathbb{R}^d$: $\mathfrak{F}(\mathbf{X})=\mathbf{Y}\subset \mathbb{R}^d, d\ll D,$ so that $\mathbf{Y}$ provides a low-dimensional representation of $\mathbf{X}$. Assume that $\mathbf{Z}\subset \mathbb{R}^D$ is a new data set, which has the similar features as $\mathbf{X}$. An important task is to find DR for the union $X=\mathbf{X}\cup\mathbf{Z}$. To do this job, we have two options: (1) Retraining the whole set $X$ using the same DR method. Let $\mathfrak{G}$ be the new DR embedding on $X$. Then $\mathfrak{G}(X)$ gives a new DR for both sets $\mathbf{X}$ and $\mathbf{Z}$, where $\mathfrak{G}(\mathbf{X})$ may be slightly different from $\mathfrak{F}(\mathbf{X})$. (2) Extending the existent DR embedding $\mathfrak{F}$ to the new set $\mathbf{Z}$. Then, without changing the DR of $\mathbf{X}$,  $\mathfrak{F}(\mathbf{Z})$ provides a DR of $\mathbf{Z}$. We call the second option an \emph{out-of-sample DR extension}.

Why is the out-of-example DR extension technique acceptable? From the view point of machine learning, $\mathfrak{F}$ provides a feature mapping from $\mathbf{X}$ to its feature space $S$. If the feature space has finite dimension $d$, we write $\mathfrak{F}=[f_1,\cdots, f_d]$, in which $f_j$ is the $j$-th feature function on $\mathbf{X}$ and $S$ is spanned by $\mathfrak{F}$. In applications, the new data set $\mathbf{Z}$ should have the similar features as $\mathbf{X}$. Therefore, it should be embedded into the same feature space $S$. To find the feature representation of $\mathbf{Z}$, it is natural to extend the feature mapping $\mathfrak{F}$ from $\mathbf{X}$ to $\mathbf{Z}$.
Geometrically, since the data set $\mathbf{X}$ has $d$ main features, we may assume it resides on a $d$-dimensional manifold $\mathcal{M}\subset \mathbb{R}^D, (d\ll D)$. Then, the DR embedding $\mathfrak{F}$ maps each point $\mathbf{x}\in \mathbf{X}$ to its manifold coordinates $\mathfrak{F}(\mathbf{x})$. Because the new data set $\mathbf{Z}$ has the similar features as $\mathbf{X}$, it should nearly reside on the manifold $\mathcal{M}$ too. Thus, the extension of the mapping $\mathfrak{F}$ on $\mathbf{Z}$ naturally provides the manifold coordinates for $\mathbf{Z}$.

In most of real-world applications, the data set $X$ has a large cardinality. It is unpractical to make a DR extension using the option (1), particularly, when the new data come as a time-stream, the retraining is very time-consuming and often infeasible. Therefore out-of-sample DR extension is an effective and feasible technique for computing DR of new data sets.

It is clear that each out-of-sample DR extension algorithm is associated with the corresponding DR method. People usually use the name of DR method to mention the out-of-sample extension algorithm, such as PCA out-of-sample DR extension and so on.

Lots of papers have introduced various out-of-example extension algorithms (see \cite{AizenbudBA15,BengioPVDRO04,CoifmanL06b} and their references). However, the mathematical theory on out-of-example extension is not studied sufficiently. Most of engineering papers only provide the algorithms without mathematical analysis.

The main purpose of this paper is to establish a preliminary mathematical theory on the out-of-sample DR extension based on the kernel methods. We analyze the extension in the framework of reproducing kernel Hilbert space (RKHS). Let the kernel of a DR embedding $\mathfrak{F}$ on the training set $\mathbf{X}$ be denoted by $\mathbf{k}$, which produces the RKHS $H_\mathbf{k}$. Let the kernel of an extension of $\mathfrak{F}$ on the set $X=\mathbf{X}\cup \mathbf{Z}$ be denoted by ${\hat k}$, which produces the RKHS $H_{\hat k}$. Let the DR kernel on $X$ be denoted by $k$, which produces the RKHS $H_k$. In the paper, we study the properties of these spaces and the relations between them.  Among all of out-of-sample DR extensions we are particularly interested in the Nystr\"{o}m-type extension, which will be discussed in details.

The paper is organizes as follows: In Section 2, we establish the mathematical model and theory for out-of-example DR extension in the RKHS framework. In Section 3, we study the out-of-sample extension of kernel PCA. In Section 4, we study the extension errors. In section 5, we discuss how to generalize the results in Section 3 to the DR extensions associated with Diffusion Maps and Spectral Clustering.

\section{Preliminary}
We first establish the mathematical model of the out-of-example DR extension for Gramian-type DR kernels. As mentioned in Introduction, let the data set $\mathbf{X} \subset\mathbb{R}^D$ be a previously given training set and $\mathbf{Z}\subset\mathbb{R}^D$ be the test set. Their union is the whole set  $X=\mathbf{X}\cup \mathbf{Z}$. For clarity, a point in the set $X$ will be written as $x$, and a point in the set $\mathbf{X}$, when it needs to stress, is written in the bold font $\mathbf{x}$. Assume also that a finite (positive) measure $\mu(\mathbf{x})$ is defined on the set $\mathbf{X}$. Let $L^2(\mathbf{X}, \mu)$ be the (real) Hilbert space defined on $\mathbf{X}$ equipped with the inner product
\begin{equation*}
\langle f,g\rangle_{L^2(\mathbf{X},\mu)} =\int_\mathbf{X} f(\mathbf{x})g(\mathbf{x})d\mu(\mathbf{x}),\quad f,g\in L^2(\mathbf{X}, \mu).
\end{equation*}
Then, $\|f\|_{L^2(\mathbf{X},\mu)}=\sqrt{\langle f,f\rangle_{L^2(\mathbf{X},\mu)}}$. Later, we will abbreviate $L^2(\mathbf{X}, \mu)$ to $L^2(\mathbf{X})$ if the measure $\mu$ is not stressed. For convenience, the extension of $\mu$ on $X$ is still denoted by itself, which is assumed to be positive and finite on $X$ too. In the similar way, we define the space $L^2(X,\mu)$ and the inner product $\langle f,g\rangle_{L^2(X)}$. We also abbreviate $L^2(X,\mu)$ to $L^2(X)$ if no confusion arises.
\begin{definition}\label{def1.1}
A function $k:X^2\to\mathbb{R}$ is called a Mercer's kernel if it satisfies the following conditions:
\begin{enumerate}
\item $k$ is symmetric: $k(x,y)=k(y,x)$;
\item $k$ is positive semi-definite;
\item $k$ is bounded on $X^2$, that is, there is an $M>0$ such that  $|k(x,y)|\le M , (x,y)\in X^2$.
\end{enumerate}
\end{definition}
In this paper, we only consider Mercer's kernels. Hence, the term \emph{kernel} will stand for Mercer's kernel. The kernel distance (associated with $k$) between two points $x,y\in X$ is defined by
\begin{equation}\label{eq1.1}
d_k(x,y)=\sqrt{k(x,x)+k(y,y)-2k(x,y)}.
\end{equation}
Recall that the kernel $e(x,y)=x^Ty$ is the Gramian of the data set $X$. Hence, $d_e(x,y)=d_2(x,y)$, where $d_2(\cdot,\cdot)$ denotes the Euclidean distance between $x$ and $y$ in $\mathbb{R}^D$.

A kernel $k$ defines a compact operator $K$ on $L^2(X)$:
\begin{equation*}
(K f)(x) =\int_X k(x,y)f(y) d\mu(y),\quad f\in L^2(X),
\end{equation*}
which has the spectral decomposition
\begin{equation}\label{eq1.2}
k(x,y)=\sum_{j=1}^m \lambda_j v_j(x)v_j(y),\quad 1\le m\le \infty
\end{equation}
where the eigenvalues are arranged decreasingly, $\lambda_1\ge \cdots\ge \lambda_m>0$, and the eigenfunctions $[v_1,\cdots, v_m]$ are normalized to satisfy
\begin{equation*}
\langle v_i,v_j\rangle_{L^2(X)} =\delta_{i,j}.
\end{equation*}
In \eqref{eq1.2}, $m$  must be finite if the cardinality $|X|<\infty$.
By \eqref{eq1.2}, we have $Kv_j=\lambda_j v_j$, i.e.,
\begin{equation}\label{eq1.2c}
v_j=\frac{1}{\lambda_j}\int_X k(x,y)v_j(y)d\mu(y).
\end{equation}
 The kernel $k$ defines a RKHS $H$, in which the inner product satisfies\cite{Aronszajn50}
\begin{equation*}
\langle f(\cdot), k(x,\cdot) \rangle_{H}=f(x),\quad f\in H, x\in X.
\end{equation*}
For $f,g\in L^2(X)$, with $f=\sum_j c_jv_j$ and $g=\sum_j d_jv_j$, we have
\begin{equation*}
\langle f, g\rangle_{L^2(X)}=\sum_j c_j d_j.
\end{equation*}
If $f$ and $g$ are also in $H$, by Mercer's theorem,
\begin{equation}\label{eq1.2a}
\langle f, g\rangle_{H}=\sum_j \frac{c_j d_j}{\lambda_j}.
\end{equation}
Therefore, when $m<\infty$, the norm $\|\cdot\|_{L^2(X)}$ is equivalent to the norm $\|\cdot\|_{H}$; when $m=\infty$, $\|f\|_{L^2(X)}\le \lambda_1 \|f\|_{H}$ so that $f\in H$ implies $f\in L^2(X)$, but the reverse is not true.

Define $\phi_j=\sqrt{\lambda_j}v_j$, Then $\langle \phi_i,\phi_j\rangle_{H} =\delta_{i,j}$, which implies that the set $\{\phi_1,\cdots, \phi_m\}$ is an o.n. basis of $H$ and
\begin{equation} \label{eq1.2b}
k(x,y)=\sum_{j=1}^m \phi_j(x)\phi_j(y).
\end{equation}
By \eqref{eq1.2c}, we have
\begin{equation}\label{eq1.2d}
\phi_j(x)=\frac{1}{\lambda_j}\int_X k(x,y)\phi_j(y)d\mu(y).
\end{equation}
\begin{definition}\label{def1.2}
The mapping $\Phi: X\to\mathbb{R}^{m}: \Phi(x)=[\phi_1(x),\cdots,\phi_m(x)]^T$ is called the feature mapping (or the DR mapping) associated with the kernel $k$, the function $\phi_j$ is called a feature function on $X$, the space spanned by all feature functions is called a feature space, and the data set $\Phi(X)\subset\mathbb{R}^m$ is called a DR of $X$.
\end{definition}
\begin{proposition}\label{pro1.1}
The feature mapping $\Phi$ preserves the kernel distance:
\begin{equation*}
d_2(\Phi(x),\Phi(y))=d_k(x,y).
\end{equation*}
\end{proposition}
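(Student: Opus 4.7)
The plan is to establish the identity by squaring both sides and reducing everything to the feature-function expansion \eqref{eq1.2b} of the kernel. Since squaring is reversible for nonnegative quantities, it suffices to verify $d_2(\Phi(x),\Phi(y))^2 = d_k(x,y)^2$.

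First, I would unfold the left-hand side using the definition of the Euclidean distance on $\mathbb{R}^m$ together with the coordinate expression $\Phi(x)=[\phi_1(x),\ldots,\phi_m(x)]^T$:
\begin{equation*}
d_2(\Phi(x),\Phi(y))^2 = \sum_{j=1}^m \bigl(\phi_j(x)-\phi_j(y)\bigr)^2 = \sum_{j=1}^m \phi_j(x)^2 - 2\sum_{j=1}^m \phi_j(x)\phi_j(y) + \sum_{j=1}^m \phi_j(y)^2.
\end{equation*}

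Next, I would recognize each of the three sums as an evaluation of the kernel via \eqref{eq1.2b}, namely $k(x,x)=\sum_j \phi_j(x)^2$, $k(y,y)=\sum_j \phi_j(y)^2$, and $k(x,y)=\sum_j \phi_j(x)\phi_j(y)$. Substituting these yields $k(x,x)-2k(x,y)+k(y,y)$, which is precisely $d_k(x,y)^2$ by \eqref{eq1.1}. Taking square roots completes the argument.

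There is essentially no obstacle here: the proposition is a direct consequence of the feature expansion \eqref{eq1.2b} of the kernel, and the proof is a one-line computation. The only subtlety worth a brief remark is convergence when $m=\infty$: the series $\sum_j \phi_j(x)^2 = k(x,x)$ converges by Mercer's theorem and the boundedness of $k$, so the rearrangement of the three series above is legitimate term by term.
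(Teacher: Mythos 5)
Your proof is correct and is essentially identical to the paper's: both expand $d_2(\Phi(x),\Phi(y))^2$ coordinatewise, identify the three sums with $k(x,x)$, $k(y,y)$, and $k(x,y)$ via \eqref{eq1.2b}, and conclude from \eqref{eq1.1}. Your added remark on convergence in the case $m=\infty$ is a harmless bonus that the paper omits.
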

\begin{proof}
\begin{equation*}
d^2_2(\Phi(x),\Phi(y))=\sum_{j=1}^m (\phi_j(x)-\phi_j(y))^2=\sum_{j=1}^m ({\phi}^2_j(x)+{\phi}^2_j(y)-2\phi(x)\phi(y)=d^2_k(x,y).
\end{equation*}
The proposition is proved.
\end{proof}

The orthogonality in a RKHS usually is different from that in a $L^2$ space. We have the following:
\begin{proposition}\label{pro1.2}
Let $H$ be an $m$-dimensional KRHS with the kernel $a(x,y)$, which has the Cholesky decomposition $a(x,y)=\sum_{j=1}^m a_j(x)a_j(y)$. Then the set $A=\{a_1,\cdots,a_m\}\subset H$ is an o.n. basis of $H$ if and only if the set $A$ is linearly independent.
\end{proposition}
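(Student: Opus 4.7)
The ($\Rightarrow$) direction is immediate: any orthonormal set is linearly independent. So the content of the proposition lies in the converse. I would prove it in two stages: first verify that each $a_j$ actually belongs to $H$ (the Cholesky decomposition alone does not guarantee this), and then use the reproducing property to force the Gram matrix of $A$ to be the identity.

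For the membership stage, I would exploit that $H$ is $m$-dimensional and that the kernel sections $\{a(x,\cdot):x\in X\}$ span $H$ — any $f\in H$ orthogonal to all of them satisfies $f(x)=\langle f,a(x,\cdot)\rangle_H=0$ for every $x$, hence $f=0$. So I can choose $x_1,\dots,x_m\in X$ with $\{a(x_k,\cdot)\}_{k=1}^m$ a basis of $H$. The Cholesky decomposition gives
\begin{equation*}
a(x_k,\cdot)=\sum_{j=1}^m a_j(x_k)\,a_j(\cdot),
\end{equation*}
so with $M_{kj}=a_j(x_k)$ each $a(x_k,\cdot)$ is a specific linear combination of the $a_j$. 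Since both $\{a(x_k,\cdot)\}_k$ and $\{a_j\}_j$ are $m$ linearly independent functions on $X$, the matrix $M$ must be invertible; inverting it expresses each $a_j$ as a linear combination of the $a(x_k,\cdot)$, placing $a_j$ in $H$. Combined with the assumed linear independence and $\dim H=m$, this also shows $A$ is a basis of $H$.

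For orthonormality, once $a_j\in H$ the reproducing property applied to the Cholesky decomposition yields
\begin{equation*}
a_i(x)=\langle a_i,a(x,\cdot)\rangle_H=\sum_{j=1}^m a_j(x)\,\langle a_i,a_j\rangle_H,\qquad x\in X.
\end{equation*}
Setting $G_{ij}=\langle a_i,a_j\rangle_H$, this reads $a_i=\sum_j G_{ij}a_j$ as an identity of functions on $X$, and linear independence of $A$ forces $G_{ij}=\delta_{ij}$. So $A$ is orthonormal, completing the proof.

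The main obstacle is the membership step. It is tempting to drop directly into the reproducing-property computation, but without first placing the $a_j$ inside $H$ the inner products $\langle a_i,a_j\rangle_H$ have no meaning. The trick is to use the finite dimension of $H$ together with the invertibility of $M$ to transfer membership from the kernel sections, which are automatically in $H$, to the individual $a_j$.
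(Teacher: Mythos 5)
Your proof is correct and follows essentially the same route as the paper: the decisive step in both is applying the reproducing property to the decomposition $a(x,\cdot)=\sum_{j}a_j(x)a_j(\cdot)$ to obtain $a_i(x)=\sum_j a_j(x)\langle a_i,a_j\rangle_H$ and then invoking linear independence to force the Gram matrix to be the identity. The only difference is that you justify the membership $a_j\in H$ explicitly via the invertibility of the sampling matrix $M_{kj}=a_j(x_k)$, whereas the paper simply asserts $H=\Span(a_1,\cdots,a_m)$ as a standard RKHS fact; this is a worthwhile extra detail rather than a different method.
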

\begin{proof}
We first assume that $A$ is linearly independent. Since $H$ is a KRHS with the kernel $a(x,y)=\sum_{j=1}^m a_j(x)a_j(y)$, we have $H=\Span(a_1,\cdots, a_m)$  and
\begin{equation*}
a_i(x)=\langle a_i(\cdot),\sum_{j=1}a_j(x)a_j(\cdot) \rangle_H=\sum_{j=1}^m a_j(x) \langle a_i, a_j\rangle_H.
\end{equation*}
By the linear independence of $A$, $\langle a_i, a_j\rangle_H=\delta_{i,j}$. Hence, $\{a_1,\cdots,a_m\}$ is an o.n. basis of $H$.  The proof of the reverse part is trivial.
\end{proof}

Let ${\tilde X}$ be a subset of $X$. Denote by ${\tilde k}(x,y)$ the restriction of $k(x,y)$ on ${\tilde X}^2$: ${\tilde k}(x,y)=k(x,y), (x,y)\in {\tilde X}^2$. It is clear that ${\tilde k}$ is also a kernel. Let the restriction of $f\in H$ on ${\tilde X}$ be denoted by ${\tilde f}$. Then all of these functions form a subspace of $H$ denoted by ${\tilde H}=\{{\tilde f};\ {\tilde f}(x)=f(x), f\in H,x\in {\tilde X}\}$. It is clear that ${\tilde k}$ is a reproducing kernel of ${\tilde H}$ with the inner product $\langle {\tilde f},{\tilde g}\rangle_{\tilde H}$. The author of \cite{Aronszajn50} proved the following:
\begin{proposition}\label{pro1.2b}
For ${\tilde f}\in{\tilde H}$, we have
\begin{equation}\label{eq1.1x}
\|{\tilde f}\|_{{\tilde H}}=\min (\|f\|_{H};\ f\in H; f|_{{\tilde X}}={\tilde f}).
\end{equation}
\end{proposition}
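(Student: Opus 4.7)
The plan is to realize the minimizer via the Hilbert-space projection theorem applied in $H$. First, I would introduce the set of extensions $S_{\tilde f} = \{f \in H : f|_{\tilde X} = \tilde f\}$ and the auxiliary subspace $S_0 = \{g \in H : g|_{\tilde X} = 0\}$. Because each point evaluation $g \mapsto g(x) = \langle g, k(x,\cdot)\rangle_H$ is continuous on $H$ for $x \in \tilde X$, $S_0$ is the intersection of closed hyperplanes and is itself closed, so $S_{\tilde f}$ is a closed affine subspace of $H$ (a translate of $S_0$ containing any fixed extension of $\tilde f$). The projection theorem then produces a unique $f_0 \in S_{\tilde f}$ of minimum norm, characterized by $f_0 \perp S_0$; this already gives existence of the minimum in \eqref{eq1.1x}.

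The second step is to locate $f_0$ concretely. From the reproducing property, $g \in S_0$ iff $\langle g, k(x,\cdot)\rangle_H = 0$ for every $x \in \tilde X$; hence $S_0 = V^{\perp}$, where $V = \Span\{k(x,\cdot) : x \in \tilde X\} \subset H$. By the double-orthogonal-complement identity in Hilbert space, $f_0 \in S_0^{\perp} = \overline{V}$ (closure in the $H$-norm).

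The crux is the equality $\|f_0\|_H = \|\tilde f\|_{\tilde H}$. For a finite combination $h = \sum_i c_i\, k(x_i,\cdot) \in V$ with $x_i \in \tilde X$, a direct bilinear computation using the reproducing property gives
\begin{equation*}
\|h\|_H^2 = \sum_{i,j} c_i c_j\, k(x_i, x_j) = \sum_{i,j} c_i c_j\, \tilde k(x_i, x_j),
\end{equation*}
while the restriction $h|_{\tilde X} = \sum_i c_i\, \tilde k(x_i,\cdot) \in \tilde H$ has $\tilde H$-norm squared equal to the same expression. Hence the restriction map $R : V \to \tilde H$ is an isometry, and because isometries extend uniquely to closures, $R$ extends to an isometric embedding $\overline{V} \to \tilde H$. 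Since $f_0 \in \overline{V}$ and $R f_0 = \tilde f$, this yields $\|f_0\|_H = \|\tilde f\|_{\tilde H}$, completing the proof.

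The main technical obstacle I anticipate is checking that $R(V)$ is in fact dense in $\tilde H$ with respect to the $\tilde H$-norm, so that the isometric embedding lands onto $\tilde H$ and every $\tilde f \in \tilde H$ is realized as $R f_0$ for some $f_0 \in \overline{V}$. This I would handle by invoking the general RKHS fact that $\Span\{\tilde k(x,\cdot) : x \in \tilde X\}$ is dense in the RKHS generated by $\tilde k$, together with uniqueness of the reproducing kernel to identify this RKHS with $\tilde H$.
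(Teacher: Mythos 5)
Your proposal is correct. Note that the paper itself gives no proof of this proposition: it is quoted from Aronszajn's 1950 paper, and your argument is essentially the classical one from that source (restriction of a reproducing kernel), so there is nothing to compare against internally. Two small points are worth making explicit if you write this up. First, you should be clear that $\|\cdot\|_{\tilde H}$ is being taken as the RKHS norm generated by $\tilde k$ via the Moore--Aronszajn construction (otherwise, if one \emph{defines} the restricted norm by the right-hand side of \eqref{eq1.1x}, the statement is vacuous); your identification of $\tilde H$ with the RKHS of $\tilde k$ in the last paragraph resolves this correctly. Second, in the step ``$R f_0=\tilde f$'' you are implicitly using that the continuous extension of the restriction isometry from $V=\Span\{k(x,\cdot):x\in\tilde X\}$ to $\overline{V}$ agrees with genuine pointwise restriction; this follows because norm convergence in $H$ and in $\tilde H$ each imply pointwise convergence (point evaluations are bounded), so the limit of $h_n|_{\tilde X}$ in $\tilde H$ is the pointwise restriction of $\lim h_n$. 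The density issue you flag at the end is the standard fact that an element of the RKHS of $\tilde k$ orthogonal to all sections $\tilde k(x,\cdot)$ vanishes identically, so it is not a genuine obstacle. With those two remarks spelled out, the argument is complete.
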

We also need the following proposition for the kernel decomposition (see \cite{Aronszajn50}).
\begin{proposition}\label{pro1.3}
Assume that the kernel $a(x,y)$ of RKHS $H$ is the sum $a(x,y)=a_1(x,y)+a_2(x,y)$, where $a_1(x,y)$ and $a_2(x,y)$ both are Mercer's kernels. Let $H_1$ and $H_2$ be the RKHSs associated with $a_1$ and $a_2$, respectively. Then for any $f_1\in H_1$ and $f_2\in H_2$,  $f=f_1+f_2\in H$ and
\begin{equation}\label{eq1.y}
\|f\|^2_H=\min \left(\|f_1\|^2_{H_1}+\|f_2\|_{H_2}^2\right),
\end{equation}
where the minimum is taken over all of the decompositions $f=f_1+f_2$ with $f_1\in H_1, f_2\in H_2$. Furthermore,
\begin{equation}\label{eq1.z}
\|f\|^2_H=\|f_1\|^2_{H_1}+\|f_2\|_{H_2}^2.
\end{equation}
if and only if $H_1\cap H_2=\{0\}$.
\end{proposition}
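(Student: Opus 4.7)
The plan is to realize $H$ as a quotient of the orthogonal direct sum $\mathcal{H}:=H_1\oplus H_2$. First I would equip $\mathcal{H}$ with the inner product $\langle(f_1,f_2),(g_1,g_2)\rangle_{\mathcal{H}}=\langle f_1,g_1\rangle_{H_1}+\langle f_2,g_2\rangle_{H_2}$ and consider the linear sum map $T(f_1,f_2):=f_1+f_2$, viewed as a map into the space of real-valued functions on $X$. A direct computation identifies $\ker T=N:=\{(h,-h):h\in H_1\cap H_2\}$, and because point evaluation is continuous in both $H_1$ and $H_2$, any limit of elements of $N$ in $\mathcal{H}$ is again of that form, so $N$ is closed. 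The image $\mathrm{range}(T)=\{f_1+f_2:f_j\in H_j\}$ therefore inherits a Hilbert-space structure from the quotient $\mathcal{H}/N$, whose norm is by construction
\begin{equation*}
\|f\|^2_{\mathcal{H}/N}=\min\{\|f_1\|^2_{H_1}+\|f_2\|^2_{H_2}:f=f_1+f_2,\ f_j\in H_j\},
\end{equation*}
the minimum being attained at the unique lift that lies in $N^{\perp}$.

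The central step is to verify that this quotient Hilbert space is the RKHS with reproducing kernel $a=a_1+a_2$. For each $x\in X$ I would take $w_x:=(a_1(x,\cdot),a_2(x,\cdot))\in\mathcal{H}$ as a natural candidate lift of $a(x,\cdot)$. Reproducing in each summand gives, for any $h\in H_1\cap H_2$,
\begin{equation*}
\langle w_x,(h,-h)\rangle_{\mathcal{H}}=\langle a_1(x,\cdot),h\rangle_{H_1}-\langle a_2(x,\cdot),h\rangle_{H_2}=h(x)-h(x)=0,
\end{equation*}
so $w_x\in N^{\perp}$. Hence for any $f=T(f_1,f_2)$,
\begin{equation*}
\langle f,\,a(x,\cdot)\rangle_{\mathcal{H}/N}=\langle(f_1,f_2),w_x\rangle_{\mathcal{H}}=f_1(x)+f_2(x)=f(x),
\end{equation*}
which is the reproducing identity on the quotient. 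By the Moore--Aronszajn uniqueness theorem, the quotient coincides with $H$ as a Hilbert space of functions, yielding both the membership $f\in H$ and the minimum formula \eqref{eq1.y}.

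For the final equivalence, if $H_1\cap H_2=\{0\}$ then $N=\{0\}$, each $f\in H$ has a unique decomposition $f=f_1+f_2$ with $f_j\in H_j$, that decomposition is automatically the minimum lift, and \eqref{eq1.z} follows from \eqref{eq1.y}. Conversely, if some $h\in H_1\cap H_2$ is nonzero, the one-parameter family $h=(ch)+((1-c)h)$, $c\in\mathbb{R}$, gives a continuum of valid decompositions for which the quantity $c^2\|h\|^2_{H_1}+(1-c)^2\|h\|^2_{H_2}$ is nonconstant in $c$, so \eqref{eq1.z} cannot hold for all decompositions simultaneously. The main obstacle, and essentially the only nontrivial observation in the whole argument, is the orthogonality $w_x\perp N$; once this is secured, the reproducing property descends cleanly from $\mathcal{H}$ to the quotient, and what remains is routine bookkeeping with quotient norms together with an appeal to Moore--Aronszajn uniqueness.
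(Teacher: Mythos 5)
Your proof is correct, and it is essentially the classical argument of Aronszajn that the paper itself relies on: the paper states Proposition~\ref{pro1.3} without proof, citing \cite{Aronszajn50}, and your construction of $H$ as the quotient $\left(H_1\oplus H_2\right)/N$ with $N=\{(h,-h):h\in H_1\cap H_2\}$, together with the verification that $w_x=(a_1(x,\cdot),a_2(x,\cdot))\in N^{\perp}$ reproduces on the quotient, is exactly the standard proof from that reference. The converse direction via the one-parameter family of decompositions of a nonzero $h\in H_1\cap H_2$ is also sound, since $c^2\|h\|^2_{H_1}+(1-c)^2\|h\|^2_{H_2}$ is a nonconstant quadratic in $c$.
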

We will write $k\gg k_1$ if the difference $k(x,y)-k_1(x,y)=k_2(x,y)$ is a kernel. By \eqref{eq1.y}, we have $\|f_1\|_H\le \|f_1\|_{H_1}$ for any $f_1\in H_1$. Furthermore, if $H_1\cap H_2=\{0\}$, then by \eqref{eq1.z} we have $\|f_i\|_H=\|f_i\|_{H_i}$ and  $\langle f_1, f_2\rangle_H=0$ for any $f_i\in H_i, i=1,2$, which yields
\begin{equation*}
H=H_1\oplus H_2,\quad  H_1\perp H_2.
\end{equation*}

\section{Out-of-Example for Kernel PCA}
We now introduce the RKHS framework for kernel PCA out-of-example extension. Recall that the RKHS $H$ is created by the kernel $k$, which has the decomposition \eqref{eq1.2}, or equivalently, \eqref{eq1.2b}. For convenience, in this section we assume $\dim(H)=m<\infty$, although the discussion can also be applied to the case of $m=\infty$.
By Definition~\ref{def1.2}, the mapping $\Phi: X\to\mathbb{R}^{m}$ is a DR mapping associated with the kernel $k$ and the set $\Phi(X)$ is a DR of $X$.

We make the similar discussion on the training set $\mathbf{X}$.
We denote by $\mathbf{k}$ the restriction of $k$ on $\mathbf{X}^2$:
\begin{equation}\label{eq2.1}
\mathbf{k}(\mathbf{x},\mathbf{y})= k(\mathbf{x},\mathbf{y}),\quad (\mathbf{x},\mathbf{y})\in\mathbf{X}^2.
\end{equation}
The condition $|k(x,y)|\le M$ on $X^2$ implies that  $|\mathbf{k}(x,y)|\le M$ on $\mathbf{X}^2$. Denote by $\mathbf{H}$ the RKHS associated with $\mathbf{k}$ on $\mathbf{X}$. Then
\begin{equation}\label{eq2.5}
\langle \mathbf{f}, \mathbf{k}(\mathbf{x},\cdot)\rangle_\mathbf{H} = \mathbf{f}(\mathbf{x}),\quad \mathbf{f}\in\mathbf{H}.
\end{equation}
Assume the spectral decomposition of $\mathbf{k}(\mathbf{x},\mathbf{y})$ is
\begin{equation}\label{eq2.6}
\mathbf{k}(\mathbf{x},\mathbf{y}) =\sum_{j=1}^d \sigma_j \mathbf{v}_j(\mathbf{x})\mathbf{v}_j(\mathbf{y})=V^T(\mathbf{x})\Sigma V(\mathbf{y})
\end{equation}
where $\Sigma=\diag(\sigma_1,\cdots,\sigma_d)$ with $\sigma_1\ge \sigma_2\ge \cdots\ge \sigma_d>0,$ ($d\le m$), $V(\mathbf{x})=[\mathbf{v}_1(\mathbf{x}),\cdots, \mathbf{v}_d(\mathbf{x})]^T$ with $\langle \mathbf{v}_i,\mathbf{v}_j\rangle_{L^2(\mathbf{X})} =\delta_{i,j}$. Writing $\psi_j=\sqrt{\sigma_j}\mathbf{v}_j$, we have
\begin{equation}\label{eq2.7}
\mathbf{k}(\mathbf{x},\mathbf{y})=\sum_{j=1}^d  \psi_j(\mathbf{x}) \psi_j(\mathbf{y})
\end{equation}
so that the set $\{\psi_1,\cdots, \psi_d\}$ forms an o.n. basis of $\mathbf{H}$, which defines the DR mapping: ${\Psi}:\mathbf{X} \to \mathbb{R}^d$ and
\begin{equation*}
{\Psi}(\mathbf{x})=[\psi_1(\mathbf{x}),\cdots, \psi_d(\mathbf{x})]^T
\end{equation*}
provides a DR of the data set $\mathbf{X}$.\\
To study out-of-sample DR extensions, we give the following:
\begin{definition}\label{def2.1}
Let $\mathbf{I}$ be the identity operator on $\mathbf{H}$ and $\mathbf{E}: \mathbf{H}\to H $ the continuous extension of $\mathbf{I}$ on $H$ such that, for each $\mathbf{f}\in \mathbf{H}$, $f=\mathbf{E}(\mathbf{f})$ satisfies
\begin{equation}\label{eq2.7a}
f(\mathbf{x})= \mathbf{f}(\mathbf{x}),\quad \mathbf{x}\in\mathbf{X}.
\end{equation}
Then we call $f$ an out-of-sample extension of $\mathbf{f}\in\mathbf{H}$. Particularly, we call
$$\mathbf{E}(\Psi)(X)=[\mathbf{E}(\psi_1)(X),\cdots, \mathbf{E}(\psi_d)(X)]^T$$
an out-of-sample DR of $X$ (associated with $\mathbf{E}$), and $\mathbf{E}(\Psi)(x)$ the out-of-sample DR of the sample $x$. We say that the out-of-example DR is exact if $\mathbf{E}(\Psi)(X)=\Phi(X)$, where $\Phi(X)$ is defined in Definition~\ref{def1.2}.
\end{definition}
We now give an integral representation of the identity $\mathbf{I}$.
\begin{lemma}\label{lem2.1}
Let the reproducing kernel $\mathbf{k}$ of $\mathbf{H}$ be given by \eqref{eq2.7}. Then
\begin{equation}\label{eq2.9}
\psi_j(\mathbf{x})=\frac{1}{\sigma_j}\int_\mathbf{X} \mathbf{k}(\mathbf{x},\mathbf{y})\psi_j(\mathbf{y})d\mu(\mathbf{y})
\end{equation}
so that the identity operator $\mathbf{I}$ on $\mathbf{H}$ has the following integral representation:
\begin{equation}\label{eq2.8}
\mathbf{f}(\mathbf{x})=\mathbf{I} (\mathbf{f})(\mathbf{x})=\sum_{j=1}^d c_j\frac{1}{\sigma_j}\int_\mathbf{X} \mathbf{k}(\mathbf{x},\mathbf{y})\psi_j(\mathbf{y})d\mu(\mathbf{y}), \quad \mathbf{f}=\sum_{j=1}^d c_j \psi_j\in \mathbf{H},
\end{equation}
and, for any $\mathbf{g}\in\mathbf{H}$,
\begin{equation}\label{eq2.10}
\langle \mathbf{g}, \mathbf{f}\rangle_\mathbf{H}=\sum_{j=1}^d c_j\frac{1}{\sigma_j}\int_\mathbf{X} \mathbf{g}(\mathbf{y}) \psi_j(\mathbf{y})d\mu(\mathbf{y}).
\end{equation}
\end{lemma}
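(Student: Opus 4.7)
The plan is to treat the three assertions in the order stated, since each feeds the next. For the eigenfunction identity \eqref{eq2.9}, I would start from the spectral representation \eqref{eq2.6}, multiply both sides by $\mathbf{v}_j(\mathbf{y})$ and integrate against $d\mu$ on $\mathbf{X}$; the $L^2(\mathbf{X})$-orthonormality of $\{\mathbf{v}_i\}$ collapses the sum to the single term $\int_\mathbf{X}\mathbf{k}(\mathbf{x},\mathbf{y})\mathbf{v}_j(\mathbf{y})\,d\mu(\mathbf{y})=\sigma_j\mathbf{v}_j(\mathbf{x})$. Dividing by $\sigma_j$ and substituting $\psi_j=\sqrt{\sigma_j}\mathbf{v}_j$ on both sides yields \eqref{eq2.9}; this is simply \eqref{eq1.2d} specialized to $\mathbf{k}$.

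The integral representation \eqref{eq2.8} of $\mathbf{I}$ then comes for free by linearity: for $\mathbf{f}=\sum_{j=1}^d c_j\psi_j$, one plugs \eqref{eq2.9} into $\mathbf{f}(\mathbf{x})=\sum_{j=1}^d c_j\psi_j(\mathbf{x})$ and collects terms to match the right-hand side of \eqref{eq2.8}.

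For the inner-product formula \eqref{eq2.10} I would combine the reproducing property \eqref{eq2.5} with \eqref{eq2.9}. Rewriting $\psi_j(\cdot)=\sigma_j^{-1}\int_\mathbf{X}\mathbf{k}(\cdot,\mathbf{y})\psi_j(\mathbf{y})\,d\mu(\mathbf{y})$ and taking the $\mathbf{H}$-inner product with any $\mathbf{g}\in\mathbf{H}$, I would pull the continuous linear functional $\mathbf{h}\mapsto\langle\mathbf{g},\mathbf{h}\rangle_\mathbf{H}$ through the integral. Applying \eqref{eq2.5} inside gives $\langle\mathbf{g},\mathbf{k}(\cdot,\mathbf{y})\rangle_\mathbf{H}=\mathbf{g}(\mathbf{y})$, so $\langle\mathbf{g},\psi_j\rangle_\mathbf{H}=\sigma_j^{-1}\int_\mathbf{X}\mathbf{g}(\mathbf{y})\psi_j(\mathbf{y})\,d\mu(\mathbf{y})$. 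Multiplying by $c_j$ and summing over $j$ produces \eqref{eq2.10}.

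I do not expect a serious obstacle. The only point requiring care is the interchange of the $\mathbf{H}$-inner product and the $L^2(\mathbf{X},\mu)$ integral in the last step; in the finite-dimensional setting of this section ($\dim\mathbf{H}=d<\infty$) the functional $\mathbf{h}\mapsto\langle\mathbf{g},\mathbf{h}\rangle_\mathbf{H}$ is automatically continuous and the integrand $\mathbf{k}(\cdot,\mathbf{y})\psi_j(\mathbf{y})$ is bounded in $\mathbf{y}$, so the swap is legitimate. If one later wished to drop the finite-dimensionality assumption mentioned above the lemma, the same step would be justified by checking $\mathbf{H}$-norm convergence of the partial sums in the spectral expansion \eqref{eq2.7}.
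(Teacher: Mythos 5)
Your argument is correct and follows essentially the same route as the paper's proof: derive \eqref{eq2.9} from the spectral decomposition \eqref{eq2.6} and $L^2(\mathbf{X})$-orthonormality, obtain \eqref{eq2.8} by linearity, and get \eqref{eq2.10} by combining the reproducing property \eqref{eq2.5} with the integral representation. Your added remark justifying the interchange of the $\mathbf{H}$-inner product with the integral is a detail the paper leaves implicit, but it does not change the approach.
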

\begin{proof}
By \eqref{eq2.6}, we have \eqref{eq2.9}. Then, applying \eqref{eq2.9}, we obtain \eqref{eq2.8}. Finally, for $\mathbf{g}\in \mathbf{H}$,  by $\langle \mathbf{g}(\cdot), \mathbf{k}(\mathbf{x}, \cdot)\rangle_\mathbf{H}=\mathbf{g}(\mathbf{x})$ and \eqref{eq2.8}, we get \eqref{eq2.10}.
\end{proof}

Note that the extensions of $\mathbf{I}$ are not unique. Among them, an important one is the Nystr\"{o}m-type extension, which is defined as follows.
\begin{definition}\label{def2.2}
Let the extension operator $\mathbf{T}$ be defined by
\begin{equation}\label{eq2.71}
\mathbf{T}(\mathbf{f})(x)=\sum_{j=1}^d c_j\frac{1}{\sigma_j}\int_\mathbf{X} k(x,\mathbf{y})\psi_j(\mathbf{y})d\mu(\mathbf{y}), \quad \mathbf{f}=\sum_{j=1}^d c_j \psi_j \in \mathbf{H}.
\end{equation}
We call ${\hat f}=\mathbf{T}(\mathbf{f})\in H$ the Nystr\"{o}m-type extension of $\mathbf{f}$.
Write $\hat\psi_j=\mathbf{T}(\psi_j)$ and $\hat\Psi=[\hat\psi_1,\cdots,\hat\psi_d]^T$. We call $\hat\Psi(X)$ the Nystr\"{o}m-type out-of-sample DR of $X$, and call
$\hat\Psi(x)$ the Nystr\"{o}m-type out-of-sample DR of the sample $x$.
\end{definition}
By \eqref{eq2.71}, the Nystr\"{o}m-type out-of-sample DR extension has the following representation:
\begin{equation}\label{eq2.81}
\hat\psi_j(x)=\frac{1}{\sigma_j}\int_\mathbf{X}  k(x,\mathbf{y})\psi_j(\mathbf{y})d\mu(\mathbf{y}),\quad  x\in X,\quad 1\le j\le d.
\end{equation}
The Nystr\"{o}m-type out-of-sample DR extensions of all functions in $\mathbf{H}$ form a subspace ${\hat H}$ (of $H$):
\begin{equation*}
{\hat H} =\mathbf{T}(\mathbf{H})= \{{\hat f}\in H;\ {\hat f}=\mathbf{T}(\mathbf{f}), \mathbf{f}\in\mathbf{H}\}.
\end{equation*}
Then
\begin{equation}\label{eq2.11}
{\hat k}(x,y)=\sum_{j=1}^d \hat\psi_j(x)\hat\psi_j(y), \quad (x,y)\in X^2,
\end{equation}
is a reproducing kernel for the RKHS ${\hat H}$. By Proposition~\ref{pro1.2},  $\{\hat\psi_1,\cdots,\hat\psi_d\}$ is an o.n. basis of ${\hat H}$.

We denote by  $\mathbf{T}^* : H\to \mathbf{H}$ the adjoint operator of $\mathbf{T}$. Let $\mathbf{f}\in\mathbf{H}, g\in H$. By \eqref{eq2.81}, we have
\begin{equation*}
\langle g, \mathbf{T}(\mathbf{f})\rangle_H=\sum_{j=1}^d c_j\frac{1}{\sigma_j}\int_\mathbf{X} \langle g,k(\cdot, \mathbf{y})\rangle_H \psi_j(\mathbf{y})d\mu(\mathbf{y})=\sum_{j=1}^d c_j\frac{1}{\sigma_j}\int_\mathbf{X}  g(\mathbf{y})\psi_j(\mathbf{y})d\mu(\mathbf{y})=\langle \mathbf{g}, \mathbf{f}\rangle_\mathbf{H},
\end{equation*}
where $\mathbf{g}$ is the restriction of $g$ on $\mathbf{X}$. Hence, $\mathbf{T}^*$ is the restriction operator from $X$ to $\mathbf{H}$:
\begin{equation*}
\mathbf{T}^*(f)(\mathbf{x})= f(\mathbf{x}),\quad f\in H, \mathbf{x}\in \mathbf{X}.
\end{equation*}
Then, the operator $\mathbf{T}^*\mathbf{T}: \mathbf{H}\to \mathbf{H}$ is the identity on $\mathbf{H}$: $\mathbf{T}^*\mathbf{T} = \mathbf{I}$, while the operator $P=\mathbf{T}\mathbf{T}^*$ is an orthogonal projection from $H$ to ${\hat H}$. We now prove the following:
\begin{theorem}\label{th2.3}
The function
\begin{equation*}
k_0(x,y)= k(x,y)-{\hat k}(x,y)
\end{equation*}
is a Mercer's kernel on $X$. Let $H_0$ be the RKHS associated with $k_0$. Then
\begin{equation}\label{eq2.13}
H_0=(I-P)(H)=\{f\in H;\ \mathbf{T}^*(f)=0\},
\end{equation}
so that
\begin{equation}\label{eq2.14}
H={\hat H}\oplus H_0, \quad {\hat H}\perp H_0.
\end{equation}
Consequently, the out-of-sample extension given by \eqref{eq2.8} is exact if and only if $\dim(H_0)=0$, or equivalently, $k(x,y)={\hat k}(x,y)$.
\end{theorem}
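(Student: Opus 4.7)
The plan is to exploit the fact (already noted in the excerpt) that $P = \mathbf{T}\mathbf{T}^{*}$ is an orthogonal projection on $H$, and to convert the orthogonal-sum decomposition of $H$ into an additive decomposition of the kernel $k$. The main work is to verify that the orthogonal complement of $\hat{H}$ in $H$ is itself a RKHS with reproducing kernel $k_0 := k - \hat k$.

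I would begin by showing that $\{\hat\psi_1,\dots,\hat\psi_d\}$ is orthonormal in the ambient norm of $H$ (not merely in its own $\hat H$-norm). Since $\hat\psi_j = \mathbf{T}(\psi_j)$ and $\mathbf{T}^{*}\mathbf{T} = \mathbf{I}$ on $\mathbf{H}$, one has $\langle\hat\psi_i,\hat\psi_j\rangle_H = \langle\mathbf{T}^{*}\mathbf{T}\psi_i,\psi_j\rangle_\mathbf{H} = \delta_{i,j}$. Because $\hat H$ is finite-dimensional and hence closed in $H$, this set extends to an o.n.\ basis $\{\hat\psi_1,\dots,\hat\psi_d,\eta_1,\dots,\eta_{m-d}\}$ of $H$. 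Expanding the reproducing kernel of $H$ along this basis yields
\begin{equation*}
k(x,y) = \sum_{j=1}^{d}\hat\psi_j(x)\hat\psi_j(y) + \sum_{i=1}^{m-d}\eta_i(x)\eta_i(y) = \hat k(x,y) + \sum_{i=1}^{m-d}\eta_i(x)\eta_i(y).
\end{equation*}
Thus $k_0(x,y)=\sum_i \eta_i(x)\eta_i(y)$ is symmetric, positive semi-definite, and bounded on $X^2$, so it is a Mercer's kernel. By Proposition~\ref{pro1.2}, $\{\eta_i\}$ is an o.n.\ basis of the RKHS $H_0$ associated with $k_0$, so $H_0 = \Span\{\eta_i\} = \hat H^{\perp}$ inside $H$.

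Next I would identify $H_0$ with $(I-P)(H)$ and with $\ker\mathbf{T}^{*}$. Since $P$ is the orthogonal projection onto $\hat H$, $(I-P)(H) = \hat H^{\perp} = H_0$. For the kernel description: $\mathbf{T}^{*}f = 0$ clearly gives $Pf = \mathbf{T}\mathbf{T}^{*}f = 0$; conversely, $Pf = 0$ gives $\|\mathbf{T}^{*}f\|_{\mathbf{H}}^{2} = \langle\mathbf{T}\mathbf{T}^{*}f,f\rangle_H = \langle Pf,f\rangle_H = 0$, hence $\mathbf{T}^{*}f = 0$. The orthogonal decomposition $H = \hat H \oplus H_0$ with $\hat H \perp H_0$ is then immediate from the projection splitting $H = P(H)\oplus(I-P)(H)$; alternatively, it follows from Proposition~\ref{pro1.3} applied to $k = \hat k + k_0$ together with the linear independence noted above.

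For the exactness clause, the chain $\dim H_0 = 0 \Longleftrightarrow H_0 = \{0\} \Longleftrightarrow k_0 \equiv 0 \Longleftrightarrow k = \hat k$ is immediate, and when $k = \hat k$ Proposition~\ref{pro1.1} gives $d_2(\hat\Psi(x),\hat\Psi(y)) = d_{\hat k}(x,y) = d_k(x,y) = d_2(\Phi(x),\Phi(y))$ for every $x,y\in X$, so the two configurations coincide in the sense of Definition~\ref{def2.1}; the converse is the same implication read backwards. The step I expect to require the most care is the verification that the $\hat\psi_j$ are $H$-orthonormal, for it is exactly this identity that makes $\hat k$ defined by \eqref{eq2.11} the genuine reproducing kernel of $\hat H$ viewed as a subspace of $H$; every subsequent step, including the clean additive split $k = \hat k + k_0$, rests on that identification.
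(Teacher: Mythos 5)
Your proof is correct, and it reaches the key fact that $k_0$ is a Mercer's kernel by a genuinely more constructive route than the paper. The paper's own proof argues abstractly: since $P=\mathbf{T}\mathbf{T}^*$ is an orthogonal projection, $\|P f\|_H\le\|f\|_H$ for all $f\in H$, and from this norm domination it infers $k\gg\hat k$ via Aronszajn's ordering of kernels, whence $k_0=k-\hat k$ is a kernel and \eqref{eq2.13}--\eqref{eq2.14} follow; the implication from the projection inequality to $k\gg\hat k$ is left implicit (the preliminaries only record the converse direction of that equivalence). You instead verify directly that $\langle\hat\psi_i,\hat\psi_j\rangle_H=\langle\mathbf{T}^*\mathbf{T}\psi_i,\psi_j\rangle_{\mathbf{H}}=\delta_{i,j}$, complete $\{\hat\psi_j\}$ to an o.n.\ basis of $H$, and expand the reproducing kernel along that basis to exhibit $k_0=\sum_i\eta_i(x)\eta_i(y)$ explicitly; this makes positive semi-definiteness manifest and simultaneously identifies $H_0$ with $\hat H^\perp=(I-P)(H)$ and, via the $\ker P=\ker\mathbf{T}^*$ computation, with $\{f:\mathbf{T}^*f=0\}$. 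What your approach buys is precisely the point you flag as delicate: it proves, rather than assumes, that the $\hat H$-inner product agrees with the restriction of the $H$-inner product, which is what legitimizes calling $P$ an orthogonal projection onto $\hat H$ in the first place. On the exactness clause both arguments are equally terse about the converse direction, though your appeal to Proposition~\ref{pro1.1} and the distance identity is a slight improvement on the paper's bare assertion.
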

\begin{proof} Since $\mathbf{T}$ is one-to-one and onto from $\mathbf{H}$ to ${\hat H}$, $\dim({\hat H})=\dim(\mathbf{H})=d$. Because $P$ is an orthogonal projection from $H$ to ${\hat H}$, for any $f\in H$, $\|f\|_H \ge \|P(f)\|_H$, which yields $k\gg {\hat k}$ and therefore $k_0$ is a Mercer's kernel and so that \eqref{eq2.13} and \eqref{eq2.14} hold. It is clear that $\dim(H_0)=\dim(H)-\dim({\hat H})=m-d$.
If $\dim(H_0)=0$, we must have $k(x,y)={\hat k}(x,y)$. Hence, the extension \eqref{eq2.8} is exact. On the other hand, it the extension given by \eqref{eq2.8} is exact, we must have $\dim(H_0)=0$ and $k(x,y)={\hat k}(x,y)$.
\end{proof}

In case that $\dim(\mathbf{H})<\dim(H)$, the out-of-example extensions are not unique. In fact, there are infinitely many such extensions. The following corollary confirms that the Nystr\"{o}m-type extension given by \eqref{eq2.8} achieves the  minimal $H$-norm.
\begin{corollary}\label{cor2.1}
For $\mathbf{f}\in \mathbf{H}$, define $H_\mathbf{f}=\{f\in H;\ \mathbf{T}^* f = \mathbf{f}\}$. Then $\mathbf{T}(\mathbf{f})$ achieves the minimal $H$-norm in the set $H_\mathbf{f}$:
\begin{equation}\label{eq2.11b}
\|\mathbf{T}(\mathbf{f})\|_H =\argmin_{f\in H_\mathbf{f}} \|f\|_H.
\end{equation}
\end{corollary}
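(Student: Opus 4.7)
The plan is to read the corollary as a direct consequence of Theorem~\ref{th2.3}, exploiting the orthogonal decomposition $H = \hat{H} \oplus H_0$ with $\hat{H} \perp H_0$. The key observation is that the constraint $\mathbf{T}^*(f) = \mathbf{f}$ fixes the $\hat{H}$-component of any $f \in H_\mathbf{f}$: indeed, since $P = \mathbf{T}\mathbf{T}^*$ is the orthogonal projection from $H$ onto $\hat{H}$, we have $P(f) = \mathbf{T}(\mathbf{T}^* f) = \mathbf{T}(\mathbf{f})$ for every $f \in H_\mathbf{f}$, so the $\hat{H}$-part of $f$ is always $\mathbf{T}(\mathbf{f})$, independently of $f$.

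First I would verify that $\mathbf{T}(\mathbf{f}) \in H_\mathbf{f}$, so that the set is nonempty and contains the candidate minimizer; this is immediate from the already-established identity $\mathbf{T}^*\mathbf{T} = \mathbf{I}$. Next, for an arbitrary $f \in H_\mathbf{f}$, I would write the orthogonal decomposition $f = P(f) + (I-P)(f)$, in which $P(f) \in \hat{H}$ and $(I-P)(f) \in H_0$ by Theorem~\ref{th2.3}. Substituting $P(f) = \mathbf{T}(\mathbf{f})$ and applying the Pythagorean identity from \eqref{eq2.14} yields
$$\|f\|_H^2 = \|\mathbf{T}(\mathbf{f})\|_H^2 + \|(I-P)(f)\|_H^2 \;\ge\; \|\mathbf{T}(\mathbf{f})\|_H^2,$$
with equality if and only if $(I-P)(f) = 0$, i.e., $f = \mathbf{T}(\mathbf{f})$. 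This shows that $\mathbf{T}(\mathbf{f})$ is the unique minimizer of $\|\cdot\|_H$ on $H_\mathbf{f}$, which is the content of \eqref{eq2.11b}.

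I do not anticipate a significant obstacle; the heavy lifting was done in Theorem~\ref{th2.3}. The only minor interpretive subtlety is the \textbf{argmin} notation in \eqref{eq2.11b}: strictly the argmin returns the minimizing element rather than the minimum value, so I would phrase the conclusion as ``$\mathbf{T}(\mathbf{f})$ attains the minimum of $\|f\|_H$ over $f \in H_\mathbf{f}$,'' leaving the uniqueness as a bonus. As a sanity check, the computation $\|\mathbf{T}(\mathbf{f})\|_H^2 = \langle \mathbf{T}^*\mathbf{T}(\mathbf{f}), \mathbf{f}\rangle_\mathbf{H} = \|\mathbf{f}\|_\mathbf{H}^2$ shows that $\mathbf{T}$ is an isometric embedding $\mathbf{H} \hookrightarrow H$, so the minimum value agrees with the general RKHS minimum-norm extension formula in Proposition~\ref{pro1.2b}.
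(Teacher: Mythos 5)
Your proposal is correct and follows essentially the same route as the paper: both fix the $\hat{H}$-component of any $f\in H_\mathbf{f}$ as $\mathbf{T}(\mathbf{f})$ via the decomposition $H=\hat{H}\oplus H_0$ from Theorem~\ref{th2.3} and then apply the Pythagorean identity. Your version is in fact slightly cleaner, since you write the squared norms correctly (the paper's displayed inequality omits the squares) and make the equality case explicit.
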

\begin{proof}
By \eqref{eq2.71}, $\mathbf{T}$ is an isometric mapping from $\mathbf{H}$ to ${\hat H}$. Hence, for any $\mathbf{f}\in\mathbf{H}$, $\|\mathbf{f}\|_\mathbf{H}=\|\mathbf{T} \mathbf{f}\|_H$. By \eqref{eq2.13}, for any $f\in H_\mathbf{f}$, there is a $g\in H_0$ such that $f=\mathbf{T}(\mathbf{f}) + g$. Therefore, by $\mathbf{T}(\mathbf{f})\perp g$,
$\|f\|_H =\|\mathbf{T} (\mathbf{f})\|_H +\|g\|_H\ge \|\mathbf{T} (\mathbf{f})\|_H$, which yields \eqref{eq2.11b}.
\end{proof}

When $\dim(H_0)=s>0$, the DR sets $\Phi(X)$  and $\\hat{\Psi}(X)$ usually are different. By Proposition~\ref{pro1.3}, we have the following:
\begin{corollary}\label{cor2.2}
The kernel distances associated with $K$, ${\tilde k}$ and $k_0$ satisfy the Pythagorean identity:
\begin{equation}\label{eq2.23}
d^2_k(x,y)=d^2_{\tilde k}(x,y)+d^2_{k_0}(x,y).
\end{equation}
\end{corollary}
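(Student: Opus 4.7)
The plan is to derive the identity directly from the definition of the kernel distance in \eqref{eq1.1}, together with the additive decomposition $k(x,y)=\hat k(x,y)+k_0(x,y)$ on $X^2$ furnished by Theorem~\ref{th2.3}. Proposition~\ref{pro1.3} is invoked only at the structural level: it certifies that both $\hat k$ and $k_0$ are genuine Mercer kernels on $X$ whose RKHSs satisfy $H=\hat H\oplus H_0$ with $\hat H\perp H_0$, and that $k_0=k-\hat k$ is the orthogonal complement of $\hat k$ inside $k$. Once this pointwise additive splitting of $k$ on $X^2$ is accepted, the Pythagorean identity for kernel distances reduces to a single line of algebra.

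Concretely, I would start from
\begin{equation*}
d_k^2(x,y)=k(x,x)+k(y,y)-2k(x,y),
\end{equation*}
substitute $k=\hat k+k_0$ in each of the three evaluations, and regroup the $\hat k$-terms and the $k_0$-terms, yielding
\begin{equation*}
d_k^2(x,y)=\bigl[\hat k(x,x)+\hat k(y,y)-2\hat k(x,y)\bigr]+\bigl[k_0(x,x)+k_0(y,y)-2k_0(x,y)\bigr].
\end{equation*}
By \eqref{eq1.1} applied separately to the kernels $\hat k$ and $k_0$, the two bracketed quantities are exactly $d_{\hat k}^2(x,y)$ and $d_{k_0}^2(x,y)$, which gives \eqref{eq2.23}.

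There is no real obstacle here; the result is essentially a consequence of linearity of the expression $k(x,x)+k(y,y)-2k(x,y)$ in the kernel $k$. The only subtlety worth flagging is notational: the corollary is stated with $\tilde k$, but in the present section the orthogonal summand of $k_0$ inside $k$ is the Nystr\"om-type kernel $\hat k$ defined in \eqref{eq2.11}, so I would either read $\tilde k$ as $\hat k$ or insert a brief remark identifying the two symbols before beginning the computation.
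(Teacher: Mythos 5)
Your proposal is correct and is essentially the argument the paper intends: the paper states the corollary without a written proof (only the pointer to Proposition~\ref{pro1.3}), and the identity indeed follows in one line from the pointwise decomposition $k=\hat k+k_0$ of Theorem~\ref{th2.3} together with the linearity of $k(x,x)+k(y,y)-2k(x,y)$ in the kernel, with Theorem~\ref{th2.3} guaranteeing that $k_0$ is a Mercer kernel so that $d_{k_0}$ is well defined. You are also right that the symbol $\tilde k$ in the statement should be read as the Nystr\"om kernel $\hat k$ of \eqref{eq2.11}.
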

The following corollary characterizes the reproducing kernel $k_0$ of $H_0$.
\begin{corollary}\label{cor2.3}
The reproducing kernel $k_0(x,y)=0$ for $x\in\mathbf{X}$ or $y\in\mathbf{X}$, or equivalently, $k(x,y)={\hat k}(x,y)$ for $(x,y)\in X^2\setminus \mathbf{Z}^2$.
\end{corollary}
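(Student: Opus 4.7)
The plan is to combine the characterization of $H_0$ in Theorem~\ref{th2.3} with the defining property of a reproducing kernel. Theorem~\ref{th2.3} identifies $H_0 = \{f\in H:\ \mathbf{T}^*(f)=0\}$, and since $\mathbf{T}^*$ is simply the restriction operator $f\mapsto f|_\mathbf{X}$, this says that every function in $H_0$ vanishes identically on $\mathbf{X}$.

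Next, because $k_0$ is the reproducing kernel of $H_0$, for each fixed $y\in X$ the section $k_0(\cdot,y)$ belongs to $H_0$. Applying the previous observation to $f=k_0(\cdot,y)$ yields $k_0(\mathbf{x},y)=0$ for every $\mathbf{x}\in\mathbf{X}$ and every $y\in X$. The symmetry $k_0(x,y)=k_0(y,x)$ then gives $k_0(x,\mathbf{y})=0$ for every $x\in X$ and every $\mathbf{y}\in\mathbf{X}$. In terms of the decomposition $k_0 = k - \hat{k}$, this is exactly the assertion $k(x,y) = \hat{k}(x,y)$ whenever at least one of $x,y$ lies in $\mathbf{X}$, i.e.\ on $X^2\setminus\mathbf{Z}^2$.

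There is no real obstacle here: once Theorem~\ref{th2.3} and the reproducing property are in place, the argument is a one-line consequence. As a sanity check one can verify $\hat{k}(\mathbf{x},y)=k(\mathbf{x},y)$ by direct computation as well, since \eqref{eq2.9} forces $\hat\psi_j(\mathbf{x})=\psi_j(\mathbf{x})$ for $\mathbf{x}\in\mathbf{X}$, and inserting this into \eqref{eq2.11} recovers $k(\mathbf{x},y)$ via the reproducing property of $\mathbf{k}$ on $\mathbf{H}$; but the RKHS argument above is considerably cleaner and avoids any calculation with the eigenfunctions.
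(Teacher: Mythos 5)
Your argument is correct and follows essentially the same route as the paper: both proofs rest on the single observation that every element of $H_0=\{f\in H:\ \mathbf{T}^*(f)=0\}$ vanishes on $\mathbf{X}$, the only (cosmetic) difference being that you apply this to the kernel sections $k_0(\cdot,y)$ via the reproducing property, whereas the paper applies it to the eigenfunctions $\alpha_j$ in the spectral decomposition $k_0(x,y)=\sum_{j=1}^{s}\eta_j\alpha_j(x)\alpha_j(y)$. Either way the conclusion is immediate, so no further comment is needed.
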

\begin{proof}
Since $H_0=\{f\in H; \ T^*(f)=0\}$, for any $f\in H_0$, $f(\mathbf{x})=0$, if $\mathbf{x}\in \mathbf{X}$. Because $\dim(H_0)=s$, $k_0(x,y)$ has the spectral decomposition
\begin{equation}\label{eq2.22}
k_0(x,y)=\sum_{j=1}^{s} \eta_j \alpha_j(x)\alpha_j(y),
\end{equation}
where $\eta_1\ge \cdots\ge \eta_s>0$, and $\alpha_j(x)=0, \mathbf{x}\in \mathbf{X}$. Therefore, $k_0(x,y)=0$ if $x\in\mathbf{X}$ or $y\in\mathbf{X}$.
\end{proof}

Write $\xi_j(x)=\sqrt{\eta_j}\alpha_j(x)$, then,
\begin{equation}\label{eq2.22b}
k_0(x,y)=\sum_{j=1}^s \xi_j(x)\xi_j(y).
\end{equation}
so that $\{\xi_1,\cdots,\xi_s\}$  is an o.n. basis of $H_0$. By Corollary~\ref{cor2.3}, we also have the following:
\begin{equation}\label{eq2.21}
d_{k_0}(x,y)=\begin{cases}0,& (x,y)\in \mathbf{X}^2,\\
\sqrt{k_0(x,x)}, & (x,y)\in \mathbf{Z}\times \mathbf{X}\text{\ or\ }(y,x)\in \mathbf{X}\times \mathbf{Z}.
\end{cases}
\end{equation}

Denote by $P_0$ be the orthogonal projection from $H$ to $H_0$. We now give the matrix forms of the orthogonal projections $P$ and $P_0$. It is obvious that $P+P_0=I$. By $T^*(\phi_j)\in \mathbf{H}$, we may write
\begin{equation}\label{eq2.new}
T^*\phi_j=\sum_{i=1}^d c_{i,j}\psi_i,\quad c_{i,j}=\langle T^*\phi_j, \psi_i\rangle_\mathbf{H}.
\end{equation}
By the orthogonality of $\{\phi_1,\cdots,\phi_m\}$ in $H$ and
$\langle T^*\phi_j, \psi_i\rangle_\mathbf{H}=\langle \phi_j, \hat\psi_i\rangle_H $,
we also have
\begin{equation*}
\hat\psi_i=\sum_{j=1}^m c_{i,j}\phi_j.
\end{equation*}
Let $C=[c_{i,j}]_{i,j=1}^{d,m}\in \mathbb{R}^{d\times m}$.
Then, the matrix form of $P$ is given by
\begin{equation}\label{eq2.20}
P(\Phi)=C^TC \Phi,
\end{equation}
where
$
CC^T=I.
$
Recall that $k_0=k-{\hat k}$. Therefore,
\begin{equation}\label{eq2.15}
k_0(x,y)=\Phi^T(x)\Phi(y)-{\hat\Psi}^T(x)\hat\Psi(y)=\Phi^T(x)(I-C^TC)\Phi(y).
\end{equation}
Since $I-C^TC$ is an orthogonal projection matrix with rank $s=m-d$, it has the following spectral decomposition:
\begin{equation}\label{eq2.19}
I-C^T C=Q^T Q,
\end{equation}
where $Q^T=[q_1,\cdots, q_s]^T\in\mathbb{R}^{m\times s}$ satisfies
$
QQ^T=I.
$
It is clear that $P_0(f)=Q^TQ f$. That is, if $f(x)=\sum_{j=1}^m f_j\phi_j(x)$  and $P_0 f(x)=\sum_{j=1}^m {\tilde f}_j \xi_j(x)$, then,
\begin{equation*}
\tilde{F}= Q^TQF,
\end{equation*}
where $F=[f_1,\cdots,f_m]^T$ and $\tilde{F}=[{\tilde f}_1,\cdots,{\tilde f}_m]^T$.
Write $\tilde\Psi=[\hat\psi_1,\cdots,\hat\psi_d, \xi_1,\cdots,\xi_s]$ and $O=\begin{bmatrix} C\\Q\end{bmatrix}$. Then $O$ is the orthogonal transform $\Phi\rightarrow\tilde\Psi$:
\begin{equation*}
\tilde\Psi = O \Phi.
\end{equation*}
\section{Estimates of approximative errors of Nystr\"{o}m-type out-of-example extension}
We have introduced  two different versions of DR of the data set $X$: The lossless kernel PCA DR $\Phi$ and the out-of-sample DR extension $\hat\Psi$. The first one is obtained by retraining the whole data set while the second one is obtained by Nystr\"{o}m extension technique. When $s>0$, they are different. We now discuss the difference between $\hat\Psi$ and $\Phi$. Recall that the DR of a data set $X$ is not unique even though the same DR method is employed. For instance. in our case, $\hat\Psi(X)\subset\mathbb{R}^d$ is a DR of $X$. Let $U\in \mathbb{R}_{d\times d}$ be an orthogonal matrix. Then $U\hat\Psi(X)$ should essentially give the same DR because it is simply a rotation of $\hat\Psi(X)$ in $\mathbb{R}^d$, which does not change the geometric structure. To eliminate the impact caused by isometric transformation on RD sets, we estimate the difference between DRs by their eigenvalues and kernel distances. All of the estimates in this section are given in $L^2(X)$.
\subsection{Eigenvalue estimates}
To compare the eigenvalues of DR kernels $k$ and ${\hat k}$, we need the spectral decomposition of ${\hat k}(x,y)$:
\begin{equation}\label{eq3.0n}
{\hat k}(x,y)=\sum_{j=1}^d \gamma_j {g}_j(x) {g}_j(y)=G^T(x)\Gamma G(y),
\end{equation}
where $\int_X G(x)G^T(x)d\mu(x)=I$. To compare the spectra $\Lambda$ (of $k(x,y)$), $\Gamma$ (of ${\hat k}(x,y)$), and $\Sigma$ (of $\mathbf{k}(\mathbf{x},\mathbf{y})$). We give the following lemma.
\begin{lemma}\label{lem3.1}
Let $E$ be a subset of $X$ and  $\{a_1(x),\cdots, a_s(x)\}\subset L^2(E)$ a linearly independent set in $L^2(E)$. Write $A(x)=[a_1(x),\cdots,a_s(x)]^T$ and $T(x,y)=A^T(x)A(y)=\sum_{j=1}^s a_j(x)a_j(y)$, which has the spectral decomposition
\begin{equation}\label{eq3.10n}
T(x,y)=\sum_{j=1}^s b_j w_j(x)w_j(y)= W^T(x)B W(y),
\end{equation}
where $\int_E W(x)W^T(x)d\mu(x)=I$.
Let $C(x)=B^{1/2}W(x)$ and
\begin{equation*}
M=\int_E A(x)A^T(x)d\mu(x)\in \mathbb{R}^{s\times s}.
\end{equation*}
Then $M=S^TBS$, where $S\in \mathbb{R}^{s\times s}$ is an orthogonal transform from $A(x)$ to $C(x)$:  $C(x)=SA(x)$.
\end{lemma}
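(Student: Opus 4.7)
The plan is to interpret both decompositions of $T(x,y)$ as arising from two orthonormal bases of the same reproducing kernel Hilbert space, and then to obtain $S$ as the orthogonal change-of-basis matrix between them. This lets the target identity $M=S^TBS$ reduce to a short computation using only $\int_E W(x)W^T(x)\,d\mu(x)=I$.

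First, I would introduce the RKHS $H_T$ on $E$ associated with the Mercer kernel $T(x,y)$. The representation $T(x,y)=\sum_{j=1}^s a_j(x)a_j(y)$ together with the hypothesis that $\{a_1,\dots,a_s\}$ is linearly independent, combined with Proposition~\ref{pro1.2}, gives that $\{a_1,\dots,a_s\}$ is an orthonormal basis of $H_T$. Setting $C(x)=B^{1/2}W(x)$ and using \eqref{eq3.10n} yields $T(x,y)=W^T(x)BW(y)=C^T(x)C(y)$, so $C$ is also a Cholesky-type factorization of $T$. Since the eigenvalues $b_j$ of $B$ are positive and the $w_j$'s are orthonormal in $L^2(E)$ (hence linearly independent), the components of $C$ are linearly independent as well, so Proposition~\ref{pro1.2} again implies that $\{c_1,\dots,c_s\}$ is an orthonormal basis of $H_T$.

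Next, because $H_T$ is $s$-dimensional with two orthonormal bases $A$ and $C$, there exists an orthogonal matrix $S\in\mathbb{R}^{s\times s}$ (the coordinate change matrix between them in $H_T$) such that $C(x)=SA(x)$ for every $x\in E$; equivalently $A(x)=S^TC(x)$. Substituting this into the definition of $M$,
\begin{equation*}
M=\int_E A(x)A^T(x)\,d\mu(x)=S^T\!\left(\int_E C(x)C^T(x)\,d\mu(x)\right)\!S.
\end{equation*}
Finally, $C(x)C^T(x)=B^{1/2}W(x)W^T(x)B^{1/2}$, and the normalization $\int_E W(x)W^T(x)\,d\mu(x)=I$ gives $\int_E C(x)C^T(x)\,d\mu(x)=B^{1/2}IB^{1/2}=B$, which yields $M=S^TBS$ as required.

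The only genuine subtlety is the clean justification that $S$ is orthogonal: one must invoke the RKHS inner product (not the $L^2(E)$ inner product, in which $A$ is generally not orthonormal) to know that the change of basis between the two $H_T$-orthonormal systems is an orthogonal matrix. Once that is in place, the remaining computation is a one-line consequence of the $L^2$-normalization of the eigenfunctions $w_j$.
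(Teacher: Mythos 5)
Your proposal is correct and follows essentially the same route as the paper: identify $A(x)$ and $C(x)=B^{1/2}W(x)$ as two orthonormal bases of the RKHS $H_T$ generated by $T$, obtain the orthogonal change-of-basis matrix $S$ with $C(x)=SA(x)$, and compute $M=S^T\left(\int_E C(x)C^T(x)\,d\mu(x)\right)S=S^TBS$ using $\int_E W(x)W^T(x)\,d\mu(x)=I$. Your explicit appeal to Proposition~\ref{pro1.2} to justify that both factorizations give orthonormal bases is a slightly more careful rendering of a step the paper's proof states without elaboration.
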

\begin{proof}
Let $H_T$ be the RKHS associated with the kernel $T(x,y)$. Since $T(x,y)=A^T(x)A(y)=C^T(x)C(y)$, both $C(x)$ and $A(x)$ are o.n. bases of $H_T$. Therefore, there is an orthogonal matrix $S\in \mathbb{R}^{s\times s}$ such that $C(x)=SA(x)$. We now have
\begin{equation*}
M=S^T \int_E C(x)C^T(x)d\mu(x)S=S^TB^{1/2}\int_E W(x)W^T(x)d\mu(x)B^{1/2}S=S^TBS.
\end{equation*}
The proof is completed.
\end{proof}

\begin{theorem}\label{th4.1}
Let the spectral decompositions of the kernels ${\hat k}(x,y)$ and $\mathbf{k}(\mathbf{x},\mathbf{y})$  be given by \eqref{eq3.0n} and \eqref{eq2.6}, respectively, where $\Gamma$ and $\Sigma$ are their spectra.  Let ${\hat V}(x)=\Sigma^{1/2}\hat\Psi(x)$ and
\begin{equation}\label{eq2.J}
J = \int_\mathbf{Z} \hat{V}(\mathbf{z}) {\hat{V}}^T(\mathbf{z})d\mu(\mathbf{z}).
\end{equation}
Define $\mathbf{t}(\mathbf{x},\mathbf{y})=\Psi^T(\mathbf{x})J\Psi(\mathbf{y})$, and
\begin{equation*}
\mathbf{l}(\mathbf{x},\mathbf{y})=\mathbf{k}(\mathbf{x},\mathbf{y})+\mathbf{t}(\mathbf{x},\mathbf{y}),\quad (\mathbf{x},\mathbf{y})\in \mathbf{X}^2.
\end{equation*}
Then $\mathbf{l}(\mathbf{x}, \mathbf{y})$ has the spectral decomposition
\begin{equation}\label{eq3.2n}
\mathbf{l}(\mathbf{x},\mathbf{y})= U^T(\mathbf{x})\Gamma U(\mathbf{y}),
\end{equation}
where $\int_\mathbf{X} U(\mathbf{x})U^T(\mathbf{x})d\mu(\mathbf{x})=I$.
\end{theorem}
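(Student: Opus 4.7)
The plan is to reduce the claim to two applications of Lemma~\ref{lem3.1} glued together by one algebraic identification: the first application, to $\mathbf{l}$ on $\mathbf{X}^2$, identifies the spectrum of $\mathbf{l}$ with the eigenvalues of a certain $d\times d$ Gramian matrix $M$; the second, to $\hat k$ on $X^2$, shows that $M$ has spectrum equal to $\Gamma$.

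First I would rewrite $\mathbf{l}$ in a factored form suited to Lemma~\ref{lem3.1}. By \eqref{eq2.7}, $\mathbf{k}(\mathbf{x},\mathbf{y})=\Psi^T(\mathbf{x})\Psi(\mathbf{y})$, and by construction $\mathbf{t}(\mathbf{x},\mathbf{y})=\Psi^T(\mathbf{x})J\Psi(\mathbf{y})$, so
\begin{equation*}
\mathbf{l}(\mathbf{x},\mathbf{y})=\Psi^T(\mathbf{x})(I+J)\Psi(\mathbf{y}).
\end{equation*}
Since $J=\int_\mathbf{Z}\hat V\hat V^T\,d\mu$ is positive semi-definite, $I+J$ is positive definite and admits a symmetric square root. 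Setting $A(\mathbf{x})=(I+J)^{1/2}\Psi(\mathbf{x})$ gives $\mathbf{l}(\mathbf{x},\mathbf{y})=A^T(\mathbf{x})A(\mathbf{y})$, which is the factored form required by Lemma~\ref{lem3.1}.

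Applying Lemma~\ref{lem3.1} with $E=\mathbf{X}$ identifies the spectrum of $\mathbf{l}$ (the diagonal entries of $B$ in any spectral decomposition $\mathbf{l}=W^TBW$ with $\int_\mathbf{X} WW^T\,d\mu=I$) with the eigenvalues of
\begin{equation*}
M=\int_\mathbf{X} AA^T\,d\mu=(I+J)^{1/2}\,\Sigma\,(I+J)^{1/2},
\end{equation*}
where I have used $\int_\mathbf{X}\Psi\Psi^T\,d\mu=\Sigma$, which follows from $\psi_j=\sqrt{\sigma_j}\,\mathbf{v}_j$ and the orthonormality of the $\mathbf{v}_j$ in $L^2(\mathbf{X})$. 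By the standard cyclicity identity $\operatorname{Spec}(AB)=\operatorname{Spec}(BA)$, the matrix $M$ has the same eigenvalues as $\Sigma^{1/2}(I+J)\Sigma^{1/2}=\Sigma+\Sigma^{1/2}J\Sigma^{1/2}$. The key algebraic step is to recognize this last matrix as $\int_X\hat\Psi\hat\Psi^T\,d\mu$: splitting $\int_X=\int_\mathbf{X}+\int_\mathbf{Z}$ and using $\hat\Psi|_\mathbf{X}=\Psi$ (since $\mathbf{T}$ is an extension of the identity on $\mathbf{H}$ by Definition~\ref{def2.2}) leaves $\Sigma+\int_\mathbf{Z}\hat\Psi\hat\Psi^T\,d\mu$, and the scaling built into the definition $\hat V=\Sigma^{1/2}\hat\Psi$ is precisely what should convert $\int_\mathbf{Z}\hat\Psi\hat\Psi^T\,d\mu$ into $\Sigma^{1/2}J\Sigma^{1/2}$.

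A second application of Lemma~\ref{lem3.1}, now to $\hat k(x,y)=\hat\Psi^T(x)\hat\Psi(y)$ on $E=X$ compared against the spectral decomposition $\hat k=G^T\Gamma G$ from \eqref{eq3.0n}, tells us that $\int_X\hat\Psi\hat\Psi^T\,d\mu$ is orthogonally conjugate to $\Gamma$. Chaining this with the first application yields $M\sim\Gamma$, hence the spectrum of $\mathbf{l}$ on $L^2(\mathbf{X})$ equals $\{\gamma_1,\ldots,\gamma_d\}$, and the desired decomposition $\mathbf{l}=U^T\Gamma U$ with $\int_\mathbf{X} UU^T\,d\mu=I$ follows. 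The main obstacle I anticipate is the bookkeeping behind the identity $\Sigma^{1/2}J\Sigma^{1/2}=\int_\mathbf{Z}\hat\Psi\hat\Psi^T\,d\mu$: the powers of $\Sigma$ coming from the definition of $\hat V$ must combine correctly with those introduced by the cyclic rearrangement of $M$, and once this bookkeeping is verified, everything else reduces to mechanical invocations of Lemma~\ref{lem3.1}.
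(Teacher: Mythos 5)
Your argument is essentially the paper's own proof: both reduce the claim to the identity $\Sigma+\Sigma^{1/2}J\Sigma^{1/2}=\int_X\hat\Psi(x)\hat\Psi^T(x)\,d\mu(x)$ together with Lemma~\ref{lem3.1} applied to $\hat k$ on $X$, the only cosmetic difference being that the paper uses the resulting orthogonal matrix $S$ to write the eigenfunctions explicitly as $U=SV$, whereas you apply Lemma~\ref{lem3.1} a second time to $\mathbf{l}$ on $\mathbf{X}$ and invoke cyclicity of the spectrum. The bookkeeping obstacle you flag is real but lies in the theorem's statement rather than in your argument: with $\hat V=\Sigma^{1/2}\hat\Psi$ taken literally one gets $\int_\mathbf{Z}\hat\Psi(\mathbf{z})\hat\Psi^T(\mathbf{z})\,d\mu(\mathbf{z})=\Sigma^{-1/2}J\Sigma^{-1/2}$, so the definition must be read as $\hat V=\Sigma^{-1/2}\hat\Psi$ for the key identity (and hence the theorem) to hold, which is exactly what the paper's own step \eqref{eq3.5n} silently assumes.
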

\begin{proof}
Since $\mathbf{k}(\mathbf{x},\mathbf{y})=\Psi^T(\mathbf{x})\Psi(\mathbf{y})=\hat\Psi^T(\mathbf{x})\hat\Psi(\mathbf{y})$  for $(\mathbf{x},\mathbf{y})\in \mathbf{X}^2$, we have
\begin{equation}\label{eq3.4n}
  \int_\mathbf{X} \hat\Psi(\mathbf{x}){\hat\Psi}^T(\mathbf{x})d\mu(\mathbf{x}) = \Sigma.
\end{equation}
By \eqref{eq2.J}, we have
\begin{equation}\label{eq3.5n}
  \int_\mathbf{Z} \hat\Psi(\mathbf{z}){\hat\Psi}^T(\mathbf{z})d\mu(\mathbf{z})=\Sigma^{1/2} J \Sigma^{1/2}.
\end{equation}
By Lemma~\ref{lem3.1}, there is an orthogonal matrix $S$ such that
\begin{equation*}
\int_X \hat\Psi(x){\hat\Psi}^T(x)d\mu(x)=S^T\Gamma S.
\end{equation*}
Since
\begin{equation*}
\int_X \hat\Psi(x){\hat\Psi}^T(x)d\mu(x)=\int_\mathbf{X} \hat\Psi(\mathbf{x}){\hat\Psi}^T(\mathbf{x})d\mu(\mathbf{x})+\int_\mathbf{Z} \hat\Psi(\mathbf{z}){\hat\Psi}^T(\mathbf{z})d\mu(\mathbf{z}),
\end{equation*}
we have
$$\Gamma = S^T \Sigma^{1/2}(I+P)\Sigma^{1/2}S.$$
Recall that $\Psi(\mathbf{x})=\Sigma^{1/2}V(\mathbf{x})$, where $V(\mathbf{x})$ satisfies $\int_\mathbf{X} V(\mathbf{x})V^T(\mathbf{x})d\mu(\mathbf{x})$.  Define $U(\mathbf{x})=SV(\mathbf{x})$. Then,
\begin{equation*}
\mathbf{l}(\mathbf{x},\mathbf{y})=V^T(\mathbf{x})\Sigma V(\mathbf{y})+V^T(\mathbf{x})\Sigma^{1/2}J\Sigma^{1/2} V(\mathbf{y})
=U^T(\mathbf{x})\Gamma U(\mathbf{y}),
\end{equation*}
where
\begin{equation*}
\int_\mathbf{X} U(\mathbf{x})U^T(\mathbf{x})d\mu(\mathbf{x})=S\left(\int_\mathbf{X} V(\mathbf{x})V^T(\mathbf{x})d\mu(\mathbf{x})\right)S^T=I,
\end{equation*}
which yields \eqref{eq3.2n}.
\end{proof}

For a kernel $a(x,y)$, we denote by $\|a\|_2$ the spectral radius of $a$. Since $a$ is symmetric, positive semi-definite, $\|a\|_2$ is equal to the largest eigenvalue of $a$. We now have the following:
\begin{corollary}\label{cor3.1}
Let $\{\lambda_1,\cdots,\lambda_m\}$, $\{\gamma_1,\cdots,\gamma_d\}$, and $\{\sigma_1, \cdots, \sigma_d\}$ be the spectra of the kernels $k(x,y)$, ${\hat k}(x,y)$, and $\mathbf{k}(\mathbf{x},\mathbf{y})$, respectively. Then
\begin{equation} \label{eq3.3n}
0\le \lambda_j-\gamma_j\le \|k_0\|_2, \quad 0\le\gamma_j-\sigma_j\le \|\mathbf{t}\|_2,\quad 1\le j\le d,
\end{equation}
and
\begin{equation*}
0\le \lambda_j\le \|k_0\|_2, \quad j=d+1,\cdots,m.
\end{equation*}
\end{corollary}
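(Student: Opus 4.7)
The plan is to reduce everything to a single fact about eigenvalues of sums of positive semi-definite compact operators: if $A = B+C$ on a Hilbert space with both $B,C$ positive semi-definite and compact, and eigenvalues are listed in non-increasing order with multiplicity (padding with zeros), then
\[
\lambda_j(B) \;\le\; \lambda_j(A) \;\le\; \lambda_j(B) + \|C\|_2 .
\]
The lower bound is Weyl monotonicity and the upper bound is Weyl's perturbation inequality $\lambda_j(B+C) \le \lambda_j(B) + \lambda_1(C)$; both are standard consequences of the min-max principle. I would apply this fact twice, once on $L^2(X)$ and once on $L^2(\mathbf{X})$.

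First I would handle the comparison between $\{\lambda_j\}$ and $\{\gamma_j\}$. By Theorem~\ref{th2.3}, $k = \hat k + k_0$ on $X^2$, where both $\hat k$ and $k_0$ are Mercer's kernels, hence both define positive semi-definite compact operators on $L^2(X)$. The spectrum of $\hat k$ is $\{\gamma_1,\dots,\gamma_d\}$; I would pad it with zeros $\gamma_{d+1}=\cdots=\gamma_m=0$ so that it is comparable with the spectrum of $k$. Applying the Weyl bounds with $B=\hat k$ and $C=k_0$ gives $\gamma_j \le \lambda_j \le \gamma_j + \|k_0\|_2$ for $1\le j\le d$, and for $d+1 \le j \le m$ this reads $0 \le \lambda_j \le \|k_0\|_2$, which is exactly what is claimed.

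Next I would compare $\{\gamma_j\}$ with $\{\sigma_j\}$ on $L^2(\mathbf{X})$. By Theorem~\ref{th4.1}, the kernel $\mathbf{l}=\mathbf{k}+\mathbf{t}$ on $\mathbf{X}^2$ has spectral decomposition $U^T(\mathbf{x})\Gamma U(\mathbf{y})$ with $\int_\mathbf{X} UU^T d\mu = I$, so the eigenvalues of the integral operator associated with $\mathbf{l}$ are precisely $\gamma_1,\dots,\gamma_d$. Before invoking Weyl I would verify that $\mathbf{t}$ is positive semi-definite: writing $J=\int_\mathbf{Z} \hat V \hat V^T d\mu$, any vector $c\in\mathbb{R}^d$ satisfies $c^T J c = \int_\mathbf{Z}\|\hat V(\mathbf{z})^T c\|^2 d\mu \ge 0$, so $J\succeq 0$ and $\mathbf{t}(\mathbf{x},\mathbf{y})=\Psi^T(\mathbf{x})J\Psi(\mathbf{y})$ is a Mercer's kernel. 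Taking $A=\mathbf{l}$, $B=\mathbf{k}$, $C=\mathbf{t}$ in the Weyl bound yields $\sigma_j \le \gamma_j \le \sigma_j + \|\mathbf{t}\|_2$ for $1 \le j \le d$, completing the claim.

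The mildly delicate point, and the one I would write most carefully, is the bookkeeping of indices across the two Hilbert spaces: $\hat k$ acts on $L^2(X)$ but has rank $d<m$, so its spectrum must be extended by zeros to be matched with that of $k$; in contrast, $\mathbf{k}$ and $\mathbf{l}$ both act on $L^2(\mathbf{X})$ and both have the same number $d$ of eigenvalues, so no padding is needed there. Aside from this indexing issue and the verification that $\mathbf{t}$ is PSD, no serious obstacle arises, since Theorems~\ref{th2.3} and~\ref{th4.1} have already provided the two kernel decompositions that feed directly into Weyl's inequalities.
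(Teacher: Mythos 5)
Your proposal is correct and follows essentially the same route as the paper: the paper also invokes the decompositions $k=\hat k+k_0$ on $L^2(X)$ and $\mathbf{l}=\mathbf{k}+\mathbf{t}$ on $L^2(\mathbf{X})$ (with $\Gamma$ identified as the spectrum of $\mathbf{l}$ via Theorem~\ref{th4.1}) and appeals to the monotonicity/Weyl perturbation theorem for eigenvalues. Your added checks---that $\mathbf{t}$ is positive semi-definite via $J\succeq 0$, and the zero-padding of the spectrum of $\hat k$---are details the paper leaves implicit, so your write-up is simply a more careful version of the same argument.
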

\begin{proof}
Since $k_0(x,y)=k(x,y)-{\hat k}(x,y)$ and $\mathbf{t}(\mathbf{x},\mathbf{y})=\mathbf{l}(\mathbf{x},\mathbf{y})-\mathbf{k}(\mathbf{x}, \mathbf{y})$ are positive semi-definite, by the Monotonicity Theorem of Eigenvalues, we obtain the results.
\end{proof}

If $\|k_0\|_2$ is small enough, by the matrix perturbation theory, we have the following:\par
\begin{theorem}\label{th5.1new}
Let $M=Q^TQ\in\mathbb{R}^{m\times m}$ be given as in \eqref{eq2.19}, $(\lambda_i, v_i(x))$ and $(\gamma_i,g_i(x))$ be the eigenvalues and eigenfunctions of the kernels $k(x,y)$ and ${\hat k}(x,y)$ as given in \eqref{eq1.2} and \eqref{eq3.0n}, respectively. Denote by $m_{i,j}$ the $(i,j)$-entry of $M$. Then, for $1\le i\le m$,
\begin{eqnarray}
  \gamma_i &=& \lambda_i(1-m_{i,i})+O(\|k_0\|_2^2),\label{eq3.10-1} \\
  g_i(x)&=& v_i(x)-\sum_{j\ne i} \frac{m_{i,j}\sqrt{\lambda_i\lambda_j}}{\lambda_i-\lambda_j}v_j(x)+O(\|k_0\|_2^2),\label{eq3.10-2}
\end{eqnarray}
and, equivalently,
\begin{eqnarray}
  \lambda_i &=& \gamma_i(1+m_{i,i})+O(\|k_0\|_2^2),\label{eq3.10-3} \\
  v_i(x)&=& g_i(x)+\sum_{j\ne i} \frac{m_{i,j}\sqrt{\gamma_i\gamma_j}}{\gamma_i-\gamma_j}g_j(x)+O(\|k_0\|_2^2),\label{eq3.10-4}
\end{eqnarray}
where we set $\gamma_i=0$ and $g_i(x)=0$ for $i>d$.
\end{theorem}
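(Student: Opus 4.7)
The plan is to treat the integral operator $K_0$ associated with $k_0=k-{\hat k}$ as a small perturbation of $\hat K$ and apply first-order Rayleigh--Schr\"{o}dinger perturbation theory in $L^2(X)$. The bridge to the matrix $M=Q^TQ$ is \eqref{eq2.15}, which combined with $\phi_p=\sqrt{\lambda_p}\,v_p$ gives
\begin{equation*}
k_0(x,y)=\sum_{p,q=1}^m m_{p,q}\sqrt{\lambda_p\lambda_q}\,v_p(x)v_q(y),\qquad \langle K_0 v_i,v_j\rangle_{L^2(X)}=m_{i,j}\sqrt{\lambda_i\lambda_j}.
\end{equation*}
These matrix elements will drive all four estimates.

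For the forward formulas \eqref{eq3.10-1}--\eqref{eq3.10-2} I would expand $g_i=v_i+\sum_{j\ne i}c_{i,j}v_j+O(\|k_0\|_2^2)$ and $\gamma_i=\lambda_i+\delta_i+O(\|k_0\|_2^2)$, substitute into $(K-K_0)g_i=\gamma_i g_i$, and collect first-order terms. Taking the $L^2(X)$-inner product with $v_i$ isolates $\delta_i=-\langle K_0 v_i,v_i\rangle=-\lambda_i m_{i,i}$, yielding \eqref{eq3.10-1}. Taking the inner product with $v_j$ ($j\ne i$) gives $c_{i,j}(\lambda_j-\lambda_i)=\langle K_0 v_i,v_j\rangle=m_{i,j}\sqrt{\lambda_i\lambda_j}$, producing \eqref{eq3.10-2}.

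For the reverse formulas I would first observe that $\lambda_i m_{i,i}=\langle K_0 v_i,v_i\rangle\le\|k_0\|_2$, hence $m_{i,i}=O(\|k_0\|_2)$ whenever $\lambda_i$ is bounded below, and similarly $m_{i,j}=O(\|k_0\|_2)$ off-diagonal. Combined with $\lambda_i-\gamma_i=O(\|k_0\|_2)$ from \eqref{eq3.10-1}, this lets one replace $\lambda_i m_{i,i}$ by $\gamma_i m_{i,i}$ up to $O(\|k_0\|_2^2)$, inverting \eqref{eq3.10-1} to \eqref{eq3.10-3}. For \eqref{eq3.10-4} I would rerun the expansion with the roles of $K$ and $\hat K$ swapped: write $v_i=g_i+\sum_{j\ne i}\tilde c_{i,j}g_j+O(\|k_0\|_2^2)$, substitute into $(\hat K+K_0)v_i=\lambda_i v_i$, and use $g_j=v_j+O(\|k_0\|_2)$ together with the off-diagonal bound on $m_{i,j}$ to replace $\langle K_0 g_i,g_j\rangle$ and $\sqrt{\lambda_i\lambda_j}$ by $m_{i,j}\sqrt{\gamma_i\gamma_j}$ at the cost of an $O(\|k_0\|_2^2)$ remainder.

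The main obstacle is uniform control of the remainders. The denominators $\lambda_i-\lambda_j$ and $\gamma_i-\gamma_j$ in \eqref{eq3.10-2} and \eqref{eq3.10-4} require a simple-spectrum (or at least quantitative gap) assumption so that the constants hidden in $O(\|k_0\|_2^2)$ are finite; this should probably be stated as an additional hypothesis. Secondary care is needed for the convention $\gamma_i=g_i=0$ when $i>d$: for $i\le d$ the terms with $j>d$ drop out of \eqref{eq3.10-2} and \eqref{eq3.10-4} automatically because $\sqrt{\gamma_i\gamma_j}=0$, which is consistent; the indices $i>d$ themselves fall outside the domain of the inverse formulas (there $\lambda_i$ is already $O(\|k_0\|_2)$ by Corollary~\ref{cor3.1}) and should be excluded from the statement of \eqref{eq3.10-3}--\eqref{eq3.10-4}.
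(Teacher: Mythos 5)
Your proposal follows essentially the same route as the paper: the paper likewise invokes first-order eigenpair perturbation theory (citing Shmueli et al.\ for the standard formulas $\gamma_i=\lambda_i-\iint k_0\,v_iv_i\,d\mu\,d\mu+O(\|k_0\|_2^2)$, etc.) and then evaluates the matrix elements via \eqref{eq2.15}, exactly as you do with $\langle K_0v_i,v_j\rangle_{L^2(X)}=m_{i,j}\sqrt{\lambda_i\lambda_j}$; the reverse formulas are obtained the same way with the roles of $k$ and ${\hat k}$ swapped. Your added caveats about spectral gaps in the denominators and the $i>d$ convention are sensible refinements the paper leaves implicit, but they do not change the argument.
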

\begin{proof}
Using the similar argument in the eigenpair first-order approximation as shown in \cite{ShmueliWA12, ShmueliSSA13}, we have
\begin{eqnarray*}
  \gamma_i &=& \lambda_i-\iint_{X^2} k_0(x,y) v_i(x)v_i(y) d\mu(x)d\mu(y)+O(\|k_0\|_2^2), \\
  g_i(x)&=& v_i(x)-\sum_{j\ne i} \frac{\iint_{X^2} k_0(x,y) v_i(x)v_j(y) d\mu(x)d\mu(y)}{\lambda_i-\lambda_j}v_j(x)+O(\|k_0\|_2^2).
\end{eqnarray*}
By \eqref{eq2.15} and \eqref{eq2.9}, we obtain \eqref{eq3.10-1} and \eqref{eq3.10-2}. In the similar way, we can obtain \eqref{eq3.10-3} and \eqref{eq3.10-4} too.
\end{proof}

\subsection{Difference of kernel distances}
The kernel distance $d_k(x,y)$ measures the Euclidean distance between the samples $x$ and $y$ in the feature space. Recall that $\Psi(X)$ and $\hat\Psi(X)$ are the DRs of the data set $X$ associated with the kernels $k(x,y)$ and ${\hat k}(x,y)$, respectively. Naturally, we measure the difference between $\Phi(X)$ and $\hat\Psi(X)$ by the following average kernel distance:
\begin{equation}\label{eq2.17}
d(\Psi,\hat\Psi)=\frac{1}{|X|} \sqrt{\iint_{X^2} \left|d^2_{k}(x,y)-d^2_{{\hat k}}(x,y)\right|d\mu(x) d\mu(y)},
\end{equation}
where $|X|=\int_X d\mu(x)$ is the volume of $X$.
By \eqref{eq2.23} and \eqref{eq2.21},  we have
\begin{equation}\label{eq2.21b}
d(\Phi,\hat\Psi)=\frac{1}{|X|}\sqrt{\iint_{\mathbf{Z}^2} d_{k_0}^2 (x,y))d\mu(x)d\mu(y)}.
\end{equation}
By the spectral decomposition of  $k_0(x,y)$  in \eqref{eq2.22}, $\Tr(k_0)=\sum_{j=1}^s \eta_j$.
We now have the following:
\begin{theorem}\label{th4.2}
\begin{equation}\label{eq3.6n}
d(\Phi,\hat\Psi)\le \frac{\sqrt{2|\mathbf{Z}|}}{|X|}\sqrt{\Tr(k_0)}.
\end{equation}
\end{theorem}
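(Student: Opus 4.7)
The plan is to start from the closed-form expression \eqref{eq2.21b} for $d(\Phi,\hat\Psi)$ and bound the integrand in two elementary moves: expand the squared kernel distance, and then drop a non-negative cross term thanks to the positive semi-definiteness of $k_0$. First I would write
\begin{equation*}
d_{k_0}^2(x,y)=k_0(x,x)+k_0(y,y)-2k_0(x,y),\qquad (x,y)\in \mathbf{Z}^2,
\end{equation*}
using definition \eqref{eq1.1}. Integrating over $\mathbf{Z}^2$ and using symmetry in $x,y$ yields
\begin{equation*}
\iint_{\mathbf{Z}^2} d_{k_0}^2(x,y)\,d\mu(x)d\mu(y)=2|\mathbf{Z}|\int_\mathbf{Z} k_0(x,x)\,d\mu(x)-2\iint_{\mathbf{Z}^2} k_0(x,y)\,d\mu(x)d\mu(y).
\end{equation*}

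Next, I would use Corollary~\ref{cor2.3} to observe that $k_0(x,x)=0$ for $x\in\mathbf{X}$, so the diagonal integral can be extended without change to all of $X$:
\begin{equation*}
\int_\mathbf{Z} k_0(x,x)\,d\mu(x)=\int_X k_0(x,x)\,d\mu(x)=\sum_{j=1}^s\eta_j=\Tr(k_0),
\end{equation*}
where the middle equality uses the spectral decomposition \eqref{eq2.22} together with $\langle\alpha_i,\alpha_j\rangle_{L^2(X)}=\delta_{i,j}$. For the cross term, since $k_0$ is a Mercer's kernel it is positive semi-definite, so that for any real-valued integrable $h$ (here the indicator $\chi_\mathbf{Z}$) one has
\begin{equation*}
\iint_{\mathbf{Z}^2} k_0(x,y)\,d\mu(x)d\mu(y)=\sum_{j=1}^s \eta_j\Bigl(\int_\mathbf{Z}\alpha_j(x)\,d\mu(x)\Bigr)^2\ge 0.
\end{equation*}
Dropping this non-negative quantity gives the upper bound $\iint_{\mathbf{Z}^2}d_{k_0}^2(x,y)\,d\mu(x)d\mu(y)\le 2|\mathbf{Z}|\Tr(k_0)$.

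Substituting into \eqref{eq2.21b} and taking square roots yields \eqref{eq3.6n} immediately. There is no real obstacle here: the two potentially subtle points are (i) recognizing that Corollary~\ref{cor2.3} lets one replace the $\mathbf{Z}$-integral of $k_0(x,x)$ by the full $X$-integral, which reproduces $\Tr(k_0)$, and (ii) using positive semi-definiteness to discard the mixed integral. Both are essentially one-line observations, so the main work is simply assembling the pieces cleanly.
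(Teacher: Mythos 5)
Your proposal is correct and follows essentially the same route as the paper: start from \eqref{eq2.21b}, expand $d_{k_0}^2$ via \eqref{eq1.1}, identify the diagonal contribution as $2|\mathbf{Z}|\Tr(k_0)$, and discard the cross term using the spectral decomposition \eqref{eq2.22} and positive semi-definiteness. Your explicit use of Corollary~\ref{cor2.3} to justify $\int_\mathbf{Z} k_0(x,x)\,d\mu(x)=\Tr(k_0)$ merely makes precise a step the paper leaves implicit.
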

\begin{proof}
By \eqref{eq2.21b},
\begin{equation*}
d(\Phi,\hat\Psi)=\frac{1}{|X|}\sqrt{\iint_{\mathbf{Z}^2} (k_0(x,x)+k_0(y,y)-2k_0(x,y))d\mu(x)d\mu(y)},
\end{equation*}
where
\begin{equation*}
\iint_{\mathbf{Z}^2} k_0(x,x)d\mu(x)d\mu(y) =\iint_{\mathbf{Z}^2} k_0(y,y)d\mu(x)d\mu(y)= |\mathbf{Z}|\Tr(k_0)
\end{equation*}
and
\begin{equation*}
   \iint_{\mathbf{Z}^2} k_0(x,y)d\mu(x) d\mu(y) =\sum_{j=1}^s \eta_j \left(\int_\mathbf{Z} \alpha_j(\mathbf{z})d\mu(\mathbf{z}) \right)^2\ge 0.
\end{equation*}
Hence, \eqref{eq3.6n} holds.
\end{proof}

By $\Tr(k_0)\le s\|k_0\|_2$, we also have
\begin{equation}\label{eq3.6o}
d(\Phi,\hat\Psi)\le \frac{\sqrt{2s|\mathbf{Z}|}}{|X|}\sqrt{\|k_0\|_2}.
\end{equation}

\section{Out-of-Example for Diffusion Maps, Laplacian Eigenmaps, and Spectral Clustering}
Diffusion Maps and Spectral Clustering are two important examples of kernel PCA, with certain variations. They employ the same kernel. Although Laplacian Eigenmaps is not a kernel PCA method, it has a close relation with Diffusion Maps. Indeed, in the continuous model, Laplacian operator is the infinitesimal of Diffusion one. Therefore, they have the same set of eigenfunctions, and a $\lambda$ is an eigenvalue of the Laplacian if and only if $e^{-\lambda}$ is an eigenvalue of the corresponding diffusion operator.  Hence, the out-of-example extension algorithms for these three methods essentially are identical.  Let
\begin{equation*}
w(x,y)=\exp\left(-\frac{\|x-y\|^2}{\epsilon}\right),\quad (x,y)\in X^2.
\end{equation*}
Then
\begin{equation*}
S(x)=\int_X w(x,y)d\mu(y)
\end{equation*}
defines a mass density on $X$, and $S=\int_X S(x)d\mu(x)$ is the total mass of $X$.

The kernel for Diffusion Map and Spectral Clustering is the following:
\begin{equation*}
{\tilde k}(x,y)=\frac{w(x,y)}{\sqrt{S(x)S(y)}}.
\end{equation*}
Let its spectral decomposition be
\begin{equation}\label{eq3.0}
{\tilde k}(x,y)=\sum_{j=0}^{m}\mu_j\phi_j(x)\phi_j(y),
\end{equation}
where $\mu_0=1$, $\phi_0(x)=\frac{\sqrt{S(x)}}{\sqrt{S}}$, and $\mu_0\ge \mu_1\ge \cdots\ge \mu_m>0$. Writing $\tilde\phi_j = \sqrt{\mu_j}\phi_j$, we have
\begin{equation}\label{eq3.1}
{\tilde k}(x, y)=\sum_{j=0}^m \tilde\phi_j(x)\tilde\phi_j(y).
\end{equation}
Denote by $H_{\tilde k}$ the RKHP associated with ${\tilde k}$. Then the set $\{\tilde\phi_0,\cdots,\tilde\phi_m\}$ is an o.n. basis of $H_{\tilde k}$ and an orthogonal (but not o.n.) system in $L^2(X,d\mu)$. Since the function $\tilde\phi_0(x)$ is only a probability distribution function of the data set $X$, it does not present any feature of data. Therefore, we do not count it in DR. We define
\begin{equation*}
\tilde{u}_j(x)=\tilde\phi_0(x)\tilde\phi_j(x),\quad \tilde{v}_j(x)=\frac{1}{\tilde\phi_0(x)}\tilde\phi_j(x),
\end{equation*}
where $\tilde{u}_0(x)={\tilde\phi}^2_0(x)=\frac{S(x)}{S}$ and $\tilde{v}_0(x)=1$.
\begin{definition}\label{def3.1}
The vector of functions
\begin{equation}\label{eq3.2a}
\tilde\Phi(x)=[\tilde\phi_1(x),\cdots,\tilde\phi_m(x)]
\end{equation}
is called a standard DR of the data set $X$ associated to ${\tilde k}$,
the vector of functions
\begin{equation}\label{eq3.b}
\tilde{U}(x)=[\tilde{u}_1(x),\cdots,\tilde{u}_m(x)]
\end{equation}
is called a weighted DR of the data set $X$ associated to ${\tilde k}$, and
the vector of functions
\begin{equation}\label{eq3.2c}
\tilde{V}(x)= [\tilde{v}_1(x),\cdots,\tilde{v}_m(x)]
\end{equation}
is called a normalized DR of the data set $X$ associated to ${\tilde k}$.
\end{definition}
We now introduce an asymmetric kernel generated by $w(x,y)$:
\begin{equation*}
    m(x,y)=\frac{1}{S(x)}w(x,y).
\end{equation*}
It is clear that $m(x,y)\ge 0$  and, for any $x\in X$,
\begin{equation}\label{eq3.3}
\int_X m(x,y)d\mu(y)=1.
\end{equation}
Hence, $m(x,y)$ defines a random walk on the data set $X$. Note that
\begin{equation} \label{eq3.4}
    m(x,y)=\tilde{u}_0(y)+ \sum_{j=1}^{m} \tilde{u}_j(y) \tilde{v}_j(x).
\end{equation}
Denote by $\mathbf{p}(t,y|x)$ the probability of the walk from $x$ to $y$ after time $t$. By \eqref{eq3.4},  $\mathbf{p}(1,y|x) =m(x,y)$. Therefore, $\tilde{V}$ is the (unit-time) transaction vector (or diffusion mapping) and $\tilde{U}$ is the vector of the feature functions of $X$.

Denote by $H_w$ the RKHS produced by the kernel $w(x,y)$.  Write
\begin{equation*}
u_j(x)=\sqrt{S}\tilde{u}_j(x)=\sqrt{S(x)}\tilde\phi_j(x).
\end{equation*}
We have
\begin{equation*}
w(x,y)=\sum_{j=0}^m {u}_j(x){u}_j(y).
\end{equation*}
Then the set $\{{u}_0(x),\cdots,{u}_m(x)\}$ is an o.n. basis of $H_w$.
Similarly, write ${v}_j(x)=\frac{1}{\sqrt{S}}\tilde{v}_j(x)=\frac{\tilde\phi_j(x)}{\sqrt{S(x)}}$ and denote by $H_a$ the RKHS produced by the kernel
\begin{equation*}
a(x,y) =\sum_{j=0}^m {v}_j(x){v}_j(y)=\frac{w(x,y)}{S(x)S(y)}.
\end{equation*}
Then, the set $\{{v}_0(x),\cdots,{v}_m(x)\}$ is an o.n. basis of $H_a$. Denote the multiplier on $H_{\tilde k}$ by
\begin{equation}\label{eq4.0}
\mathfrak{S}_S(f)= \sqrt{S(\cdot)}f(\cdot),
\end{equation}
whose inverse is $\mathfrak{S}_S^{-1}(f)=\frac{1}{\sqrt{S(\cdot)}}f(\cdot)$. Latter, if $S(x)$ in \eqref{eq4.0} is not stressed, we will simply denote $\mathfrak{S}_S$  by $\mathfrak{S}$. It is clear that $\mathfrak{S}$ is an isometric mapping from $H_{\tilde k}$ to $H_w$ and $\mathfrak{S}^{-1}$ is an isometric mapping from $H_{\tilde k}$ to $H_a$. Thus, $\mathfrak{S}^2$ is an isometric mapping from $H_a$ to $H_w$.

We now return to the discussion of the out-of-sample extension for Diffusion Maps.  Let  the spectral decomposition of the Diffusion-Map kernel is give by
\begin{equation}\label{eq4.1}
\mathbf{k}(\mathbf{x},\mathbf{y})=\frac{\mathbf{w}(\mathbf{x},\mathbf{y})}{\sqrt{\mathbf{S}(\mathbf{x})\mathbf{S}(\mathbf{y})}}
=\sum_{j=1}^{d}\lambda_j\psi_j(\mathbf{x})\psi_j(\mathbf{y})=\sum_{j=1}^{d}\tilde\psi_j(\mathbf{x})\tilde\psi_j(\mathbf{y}),\quad (\mathbf{x},\mathbf{y})\in\mathbf{X}^2,
\end{equation}
where $\mathbf{S}(\mathbf{x})=\int_\mathbf{X} \mathbf{w}(\mathbf{x},\mathbf{y})d\mu(\mathbf{y})$ and $\tilde\psi(\mathbf{x})=\sqrt{\lambda_j}\psi_j(\mathbf{x})$.
Because of ${\tilde k}(\mathbf{x},\mathbf{y})\neq \mathbf{k}(\mathbf{x},\mathbf{y})$, the out-of-sample extension algorithms for a standard kernel PCA, as developed in the previous section, cannot be directly applied for Diffusion Maps. Hence, we need to make a modification based on the following lemma:
\begin{lemma}\label{lm3.1}
Let  the spectral decomposition of $\mathbf{k}$ be given by \eqref{eq4.1}.
Write $\mathbf{u}_j(\mathbf{x})=\sqrt{\mathbf{S}(\mathbf{x})}\tilde\psi_j(\mathbf{x})$, and
$\mathbf{v}_j(\mathbf{x})=\frac{1}{\sqrt{\mathbf{S}(\mathbf{x})}}\tilde\psi_j(\mathbf{x})$.
Then
\begin{eqnarray}
\mathbf{u}_j(\mathbf{x})&=&\frac{1}{\lambda_j}\int_\mathbf{X} \mathbf{w}(\mathbf{x},\mathbf{y})\mathbf{v}_j(\mathbf{y})d\mu(\mathbf{y}), \label{eq3.4a}\\
\mathbf{v}_j(\mathbf{x})&=&\frac{1}{\lambda_j}\int_\mathbf{X} \mathbf{w}(\mathbf{x},\mathbf{y})\mathbf{u}_j(\mathbf{y})d\mu(\mathbf{y}).\label{eq3.4b}
\end{eqnarray}
\end{lemma}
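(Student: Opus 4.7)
The plan is to obtain both identities by direct algebraic manipulation of the eigenequation satisfied by the $\tilde\psi_j$'s. From the spectral decomposition \eqref{eq4.1}, each $\tilde\psi_j$ is an eigenfunction of the integral operator with kernel $\mathbf{k}$ and eigenvalue $\lambda_j$, so that
\begin{equation*}
\lambda_j\tilde\psi_j(\mathbf{x}) = \int_\mathbf{X} \mathbf{k}(\mathbf{x},\mathbf{y})\tilde\psi_j(\mathbf{y})\,d\mu(\mathbf{y}) = \int_\mathbf{X} \frac{\mathbf{w}(\mathbf{x},\mathbf{y})}{\sqrt{\mathbf{S}(\mathbf{x})\mathbf{S}(\mathbf{y})}}\,\tilde\psi_j(\mathbf{y})\,d\mu(\mathbf{y}).
\end{equation*}
Both identities will be obtained by redistributing the $\sqrt{\mathbf{S}}$-factors between the two sides: the left-hand side is converted from $\tilde\psi_j$ into $\mathbf{u}_j$ or $\mathbf{v}_j$ by multiplying or dividing by $\sqrt{\mathbf{S}(\mathbf{x})}$, and the matching factor of $\sqrt{\mathbf{S}(\mathbf{y})}$ inside the integrand is absorbed into $\tilde\psi_j(\mathbf{y})$ to turn it into $\mathbf{v}_j(\mathbf{y})$ or $\mathbf{u}_j(\mathbf{y})$.

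For \eqref{eq3.4a}, I would multiply the displayed eigenequation through by $\sqrt{\mathbf{S}(\mathbf{x})}$. The left becomes $\lambda_j\sqrt{\mathbf{S}(\mathbf{x})}\tilde\psi_j(\mathbf{x}) = \lambda_j\mathbf{u}_j(\mathbf{x})$; on the right, the $\sqrt{\mathbf{S}(\mathbf{x})}$ cancels the corresponding factor in the denominator of $\mathbf{k}$, and what remains in the integrand, $\tilde\psi_j(\mathbf{y})/\sqrt{\mathbf{S}(\mathbf{y})}$, is exactly $\mathbf{v}_j(\mathbf{y})$. Dividing by $\lambda_j$ gives \eqref{eq3.4a}.

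For \eqref{eq3.4b}, I would carry out the dual manipulation, substituting $\tilde\psi_j(\mathbf{y}) = \mathbf{u}_j(\mathbf{y})/\sqrt{\mathbf{S}(\mathbf{y})}$ inside the integrand and pulling out the factor $1/\sqrt{\mathbf{S}(\mathbf{x})}$ on the left so that $\tilde\psi_j(\mathbf{x})/\sqrt{\mathbf{S}(\mathbf{x})} = \mathbf{v}_j(\mathbf{x})$ appears. Conceptually, the symmetry $\mathbf{w}(\mathbf{x},\mathbf{y}) = \mathbf{w}(\mathbf{y},\mathbf{x})$ lets us view the pair $(\mathbf{u}_j,\mathbf{v}_j)$ as the left/right singular pair of the asymmetric random-walk operator with kernel $\mathbf{w}(\mathbf{x},\mathbf{y})/\mathbf{S}(\mathbf{x})$, of which \eqref{eq3.4a} and \eqref{eq3.4b} are the two defining relations.

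The main obstacle is purely bookkeeping: tracking which $\sqrt{\mathbf{S}}$-factors cancel and which are absorbed, so that after the rescalings the kernel left in the integrand is the original weight $\mathbf{w}(\mathbf{x},\mathbf{y})$ rather than $\mathbf{k}$, $m$, or the symmetric variant $\mathbf{a}$ appearing elsewhere in the section. Once the accounting is done the lemma is a one-line restatement of the eigenequation under the change of variables $\tilde\psi_j\mapsto\mathbf{u}_j$ and $\tilde\psi_j\mapsto\mathbf{v}_j$, so no functional-analytic subtleties arise.
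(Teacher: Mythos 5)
Your derivation of \eqref{eq3.4a} is exactly the paper's own proof: start from the eigenequation $\lambda_j\tilde\psi_j(\mathbf{x})=\int_\mathbf{X}\mathbf{k}(\mathbf{x},\mathbf{y})\tilde\psi_j(\mathbf{y})d\mu(\mathbf{y})$, multiply by $\sqrt{\mathbf{S}(\mathbf{x})}$, and absorb the leftover $1/\sqrt{\mathbf{S}(\mathbf{y})}$ into $\tilde\psi_j(\mathbf{y})$ to produce $\mathbf{v}_j(\mathbf{y})$. That half is correct and matches the paper line for line.

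The gap is in \eqref{eq3.4b}, and it sits precisely in the bookkeeping you flagged as ``the main obstacle'' and then asserted works out. Carry your dual manipulation through: substituting $\tilde\psi_j(\mathbf{y})=\mathbf{u}_j(\mathbf{y})/\sqrt{\mathbf{S}(\mathbf{y})}$ into the eigenequation and dividing by $\sqrt{\mathbf{S}(\mathbf{x})}$ yields
\begin{equation*}
\mathbf{v}_j(\mathbf{x})=\frac{1}{\lambda_j}\int_\mathbf{X}\frac{\mathbf{w}(\mathbf{x},\mathbf{y})}{\mathbf{S}(\mathbf{x})\mathbf{S}(\mathbf{y})}\,\mathbf{u}_j(\mathbf{y})\,d\mu(\mathbf{y}),
\end{equation*}
so the kernel that survives is the variant $a(x,y)=w(x,y)/(S(x)S(y))$ from earlier in the section, not $\mathbf{w}(\mathbf{x},\mathbf{y})$ itself. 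The identity \eqref{eq3.4b} as written does not follow from this manipulation and is false in general: in matrix form it asserts $WD^{1/2}\tilde\psi_j=\lambda_j D^{-1/2}\tilde\psi_j$ with $D=\diag(\mathbf{S})$, which, using $W=D^{1/2}KD^{1/2}$ and $K\tilde\psi_j=\lambda_j\tilde\psi_j$, would force $DKD\tilde\psi_j=K\tilde\psi_j$ --- true only for trivial $D$. Your own ``left/right singular pair'' heuristic points the same way: $\mathbf{u}_j$ and $\mathbf{v}_j$ are exchanged by two \emph{different} operators, one built from $w$ and one from $a$, not by $w$ acting on both sides. (The paper proves only \eqref{eq3.4a} and dismisses \eqref{eq3.4b} as ``similar,'' so the defect is in the statement as much as in your proposal; a correct write-up must either replace $\mathbf{w}$ by $a$ in \eqref{eq3.4b} or renormalize $\mathbf{u}_j$ and $\mathbf{v}_j$.)
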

\begin{proof}
We have
\begin{equation*}
\mathbf{u}_j(\mathbf{x})=\sqrt{\mathbf{S}(\mathbf{x})}\tilde\psi_j(\mathbf{x})=\frac{1}{\lambda_j}\int_\mathbf{X} \sqrt{\mathbf{S}(\mathbf{x})}\sqrt{\mathbf{S}(\mathbf{y})}\mathbf{k}(\mathbf{x},\mathbf{y})\mathbf{v}_j(\mathbf{y})d\mu(\mathbf{y}),
\end{equation*}
which yields \eqref{eq3.4a}. The proof for \eqref{eq3.4b} is similar.
\end{proof}

By Lemma~\ref{lm3.1}, for $f\in H_\mathbf{w}$, we have
\begin{equation*}
f(\mathbf{x})=\langle f, \mathbf{w}(\mathbf{x}, \cdot)\rangle_{H_\mathbf{w}}=\sum_{j=0}^d c_j\frac{1}{\lambda_j}\int_\mathbf{X} \mathbf{w}(\mathbf{x},\mathbf{y})\mathbf{v}_j(\mathbf{y})d\mu(\mathbf{y}), \quad f=\sum_{j=0}^d c_j \mathbf{u}_j\in H_\mathbf{w}
\end{equation*}
and
\begin{equation}\label{eq3.5}
\langle g, f\rangle_{H_\mathbf{w}}=\sum_{j=0}^d c_j\frac{1}{\lambda_j}\int_\mathbf{X} g(\mathbf{y}) \mathbf{v}_j(\mathbf{y})d\mu(\mathbf{y}).
\end{equation}
We now introduce the operator $\mathbf{T}: H_\mathbf{w}\to H_w$:
\begin{equation}\label{eq3.6}
(\mathbf{T} f)(x)=\sum_{j=0}^d c_j\frac{1}{\lambda_j}\int_\mathbf{X} w(x,\mathbf{y})\mathbf{v}_j(\mathbf{y})d\mu(\mathbf{y}), \quad f=\sum_{j=0}^d c_j \mathbf{u}_j\in H_\mathbf{w}.
\end{equation}
\begin{lemma}\label{lm3.2}
We have the following:
\begin{enumerate}
    \item The adjoint operator $\mathbf{T}^*$ is the restriction from $H_w$ to $H_\mathbf{w}$: For any $F\in H_w$, $\mathbf{T}^*F(\mathbf{x})=F(\mathbf{x}), \mathbf{x}\in \mathbf{X}$.
    \item $\mathbf{T}^*\mathbf{T}=I$ on $H_\mathbf{w}$.
    \item $\mathbf{P}=\mathbf{T}\mathbf{T}^*$ is an orthogonal projection from $H_w$ to its subspace $H_{\hat w} = \mathbf{P}(H_w)$. Let ${\hat u}_j=\mathbf{T}(\mathbf{u}_j)$. Then $\{{\hat u}_0,\cdots,{\hat u}_d\}$ is an o.n. basis of $H_{\hat w}$ so that ${\hat w}(x,y)=\sum_{j=0}^d {\hat u}_j(x){\hat u}_j(y)$ is a reproducing kernel of the RKHS $H_{\hat w}$.
\end{enumerate}
\end{lemma}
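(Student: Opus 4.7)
The plan is to mirror exactly the Nystr\"om-extension argument used for kernel PCA in Section 3, now adapted to the weighted Diffusion-Maps setting where the RKHS on $\mathbf{X}$ is $H_\mathbf{w}$ (not $H_\mathbf{k}$) and the extension target is $H_w$. The three assertions are essentially the standard properties of an extension/restriction pair in the RKHS framework; the only new technical ingredient is Lemma~\ref{lm3.1}, which plays the role that \eqref{eq2.9} played in Section~3.

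For part (1), I would fix an arbitrary $F\in H_w$ and $f=\sum_{j=0}^d c_j\mathbf{u}_j\in H_\mathbf{w}$ and compute $\langle F,\mathbf{T}f\rangle_{H_w}$ directly from the definition \eqref{eq3.6}. Using the reproducing property $\langle F,w(\cdot,\mathbf{y})\rangle_{H_w}=F(\mathbf{y})$ and interchanging the finite sum with the $\mathbf{X}$-integral, I obtain
\begin{equation*}
\langle F,\mathbf{T}f\rangle_{H_w}=\sum_{j=0}^d c_j\frac{1}{\lambda_j}\int_\mathbf{X} F(\mathbf{y})\mathbf{v}_j(\mathbf{y})\,d\mu(\mathbf{y}).
\end{equation*}
Comparing this with the integral representation \eqref{eq3.5} of the inner product on $H_\mathbf{w}$, the right-hand side equals $\langle F|_\mathbf{X}, f\rangle_{H_\mathbf{w}}$. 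Since $f$ is arbitrary, $\mathbf{T}^*F$ must coincide with the restriction of $F$ to $\mathbf{X}$, proving (1).

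For part (2), I apply this identification to $F=\mathbf{T}\mathbf{u}_j$: on $\mathbf{X}$ the extended function \eqref{eq3.6} reduces to $\tfrac{1}{\lambda_j}\int_\mathbf{X}\mathbf{w}(\mathbf{x},\mathbf{y})\mathbf{v}_j(\mathbf{y})\,d\mu(\mathbf{y})$ because $w(\mathbf{x},\mathbf{y})=\mathbf{w}(\mathbf{x},\mathbf{y})$ on $\mathbf{X}^2$, and by \eqref{eq3.4a} this integral equals $\mathbf{u}_j(\mathbf{x})$. Therefore $\mathbf{T}^*\mathbf{T}\mathbf{u}_j=\mathbf{u}_j$ for every basis element, whence $\mathbf{T}^*\mathbf{T}=I$ on $H_\mathbf{w}$. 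Part (3) is then formal: $(\mathbf{T}\mathbf{T}^*)^*=\mathbf{T}\mathbf{T}^*$ gives self-adjointness, and $\mathbf{P}^2=\mathbf{T}(\mathbf{T}^*\mathbf{T})\mathbf{T}^*=\mathbf{T}\mathbf{T}^*=\mathbf{P}$ gives idempotence, so $\mathbf{P}$ is an orthogonal projection with image $H_{\hat w}:=\mathbf{P}(H_w)$. Since $\mathbf{T}^*\mathbf{T}=I$, the map $\mathbf{T}$ is a linear isometry of $H_\mathbf{w}$ onto $\mathbf{T}(H_\mathbf{w})=\mathbf{P}(H_w)=H_{\hat w}$, so the images $\hat u_j=\mathbf{T}\mathbf{u}_j$ of the o.n.\ basis $\{\mathbf{u}_0,\cdots,\mathbf{u}_d\}$ form an o.n.\ basis of $H_{\hat w}$; they are in particular linearly independent, so by Proposition~\ref{pro1.2} the Cholesky-type sum $\hat w(x,y)=\sum_{j=0}^d\hat u_j(x)\hat u_j(y)$ is the reproducing kernel of $H_{\hat w}$.

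The only place that requires genuine care is the Fubini swap inside part (1): one must justify interchanging the $H_w$-inner product with both the finite sum over $j$ and the $\mathbf{X}$-integral against $\mathbf{v}_j$. The finite sum is trivial, and for the integral one uses that $w(\cdot,\mathbf{y})\in H_w$ for each fixed $\mathbf{y}\in\mathbf{X}$ together with the continuity of the inner product and the boundedness of $w$, so the exchange is valid. Once this step is accepted, everything else reduces to the operator-algebraic bookkeeping already carried out in the proof of Theorem~\ref{th2.3}, and no further obstacle arises.
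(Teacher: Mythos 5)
Your proposal is correct and follows essentially the same route as the paper's own proof: the adjoint is identified by the same computation of $\langle F,\mathbf{T}f\rangle_{H_w}$ against \eqref{eq3.5}, part (2) follows from $\mathbf{T}f|_{\mathbf{X}}=f$ via Lemma~\ref{lm3.1}, and part (3) from the resulting isometry (the paper verifies $\langle \hat u_i,\hat u_j\rangle_{H_w}=\delta_{i,j}$ directly, which is the same fact). Your added care about the Fubini interchange and the explicit self-adjointness/idempotence check for $\mathbf{P}$ are harmless refinements, not a different argument.
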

\begin{proof}
Let $g\in H_w$ and $\mathbf{f}\in H_\mathbf{w}$. Then
\begin{equation*}
\langle g, \mathbf{T}(\mathbf{f})\rangle_{H_w}=\sum_{j=0}^d c_j\frac{1}{\lambda_j}\int_\mathbf{X} \langle g,{\tilde k}(\cdot, \mathbf{y})\rangle_{H_w} \mathbf{v}_j(\mathbf{y})d\mu(\mathbf{y})=\sum_{j=0}^d c_j\frac{1}{\lambda_j}\int_\mathbf{X} g(\mathbf{y})\mathbf{v}_j(\mathbf{y})d\mu(\mathbf{y})=\langle \mathbf{g}, \mathbf{f}\rangle_{H_\mathbf{w}},
\end{equation*}
where $\mathbf{g}$ is the restriction of $g$ on $\mathbf{X}$. Hence, for any $F\in H_w$,  $\mathbf{T}^* F(\mathbf{x})= F(\mathbf{x}),\mathbf{x}\in \mathbf{X}$. Item 1 is proved. By the definition of $\mathbf{T}$ in \eqref{eq3.6}, $\mathbf{T} (f)(\mathbf{x})=f(\mathbf{x}), \mathbf{x}\in\mathbf{X}$. Hence, $\mathbf{T}^*\mathbf{T}=I$ on $H_\mathbf{w}$. Item 2 is proved. Finally, it is clear that the set $\{{\hat u}_0,\cdots, {\hat u}_d\}$ is linearly independent and spans the space $H_{\hat w}$. By
\begin{equation*}
\langle {\hat u}_j,{\hat u}_j\rangle_{H_w}=\langle \mathbf{u}_i,\mathbf{T}^*\mathbf{T}(\mathbf{u}_j)\rangle_{H_\mathbf{w}}=\langle \mathbf{u}_i,\mathbf{u}_j\rangle_{H_\mathbf{w}}=\delta_{i,j},
\end{equation*}
$\{{\hat u}_0,\cdots, {\hat u}_d\}$ is an o.n. basis of $H_{\hat w}$. Item 3 is proved.
\end{proof}

Denote by $H_0$ the orthogonal complement of $H_{\hat w}$ with respect to $H_w$: $H_w=H_{\hat w}\oplus H_0$, $H_0\perp H_{\hat w}$. Define $w_0=w-{\hat w}$. Then $w_0$ is the kernel of the RKHS $H_0$.
\begin{theorem}\label{th3.2}
The out-of-sample extension given by $\mathbf{T}$ from $H_\mathbf{w}$ to $H_{\tilde w}$ is exact if and only if $\dim(H_{\tilde w})=\dim(H_\mathbf{w})$, or equivalently, $H_0=\{0\}$.
\end{theorem}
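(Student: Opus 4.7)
The plan is to mirror the proof of Theorem~\ref{th2.3} from Section 3, using the structural results already assembled in Lemma~\ref{lm3.2}. The identification of $H_w = H_{\hat w} \oplus H_0$ with $H_{\hat w} \perp H_0$ is the backbone, and exactness will turn out to be exactly the statement that the orthogonal complement $H_0$ is trivial.

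First, I would extract from Lemma~\ref{lm3.2} the fact that $\mathbf{T}:H_\mathbf{w}\to H_{\hat w}$ is an isometric bijection (it is onto by the definition of $H_{\hat w}=\mathbf{T}(H_\mathbf{w})$, and injective because $\mathbf{T}^*\mathbf{T}=I$). Consequently $\dim(H_{\hat w})=\dim(H_\mathbf{w})$. Next, since $\mathbf{P}=\mathbf{T}\mathbf{T}^*$ is the orthogonal projection of $H_w$ onto $H_{\hat w}$, the standard orthogonal decomposition gives $H_w=H_{\hat w}\oplus H_0$ with $H_{\hat w}\perp H_0$, so that $\dim(H_w)=\dim(H_{\hat w})+\dim(H_0)=\dim(H_\mathbf{w})+\dim(H_0)$. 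This immediately yields the equivalence $\dim(H_w)=\dim(H_\mathbf{w})\iff H_0=\{0\}$.

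It remains to tie $H_0=\{0\}$ to exactness of the Nyström extension, which is the step analogous to the final line of Theorem~\ref{th2.3}. By Proposition~\ref{pro1.3} applied to the decomposition $w=\hat w + w_0$, we have $H_0=\{0\}\iff w_0\equiv 0\iff w=\hat w$ on $X^2$. When $w=\hat w$, the kernels $\tilde k$ and its Nyström counterpart $\hat{\tilde k}$ (obtained by dividing by $\sqrt{S(x)S(y)}$) coincide on $X^2$, the feature functions $\tilde\phi_j$ agree with their Nyström images $\hat{\tilde\phi}_j$ on $X$, and the extension reproduces the true DR $\tilde\Phi(X)$ in the sense of Definition~\ref{def2.1}; conversely, if the extension is exact then by uniqueness of the spectral decomposition of the reproducing kernel, $\hat w = w$ so $H_0=\{0\}$.

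The main obstacle, as I see it, is purely notational rather than mathematical: one has to be careful that ``exactness'' is formulated in terms of the appropriate feature system (either in $H_{\tilde k}$ or $H_w$, related through the isometry $\mathfrak{S}$ of \eqref{eq4.0}), and that the passage from $w=\hat w$ back to equality of the diffusion DR on $X$ goes through this isometry. Once that bookkeeping is handled, the argument is essentially a direct transcription of Theorem~\ref{th2.3}, because the operator-theoretic skeleton ($\mathbf{T}^*\mathbf{T}=I$, $\mathbf{P}$ an orthogonal projection, $H_w=H_{\hat w}\oplus H_0$) is identical.
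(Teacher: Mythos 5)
Your proposal is correct and follows exactly the route the paper intends: the paper's own ``proof'' of Theorem~\ref{th3.2} is literally the single sentence that the argument is similar to that of Theorem~\ref{th2.3}, and what you have written is a faithful transcription of that argument using the operator facts from Lemma~\ref{lm3.2} ($\mathbf{T}^*\mathbf{T}=I$, $\mathbf{P}=\mathbf{T}\mathbf{T}^*$ an orthogonal projection, $H_w=H_{\hat w}\oplus H_0$), plus the bookkeeping through the multiplier $\mathfrak{S}$ that the diffusion-map normalization requires. No gap; your level of detail on the converse direction matches (and slightly exceeds) what the paper supplies for Theorem~\ref{th2.3} itself.
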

\begin{proof}
Since the proof is similar to that for Theorem~\ref{th2.3}, we skip the details here.
\end{proof}

When we need to extend the DR data from $\mathbf{X}$ to $\mathcal{Z}$ using Diffusing Maps, we apply the multiplier $\mathfrak{S}$. We present the out-of-sample extension algorithm for Diffusion Maps in the following:
\begin{theorem}
Let the multiplier $\mathfrak{S}$ be defined by \eqref{eq4.0}. Then we have the following:
\begin{itemize}
\item The weighted out-of-sample DR extension for $x\in X$ is $\tilde{u}_j(x)=\mathbf{T}(\mathbf{v}_j)(x)$.
\item The standard out-of-sample DR extension for $x\in X$ is $\mathfrak{S}^{-1/2}(\tilde{u}_j(x))$.
\item The normalized out-of-sample DR extension is $\mathfrak{S}^{-1}(\tilde{u}_j(x))$.
\end{itemize}
\end{theorem}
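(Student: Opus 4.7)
The plan is to verify the three bullets in turn by combining the Nyström operator $\mathbf{T}\colon H_\mathbf{w}\to H_w$ from \eqref{eq3.6} with the multiplier $\mathfrak{S}$ from \eqref{eq4.0}. The crux is that $\mathbf{T}$ produces the out-of-sample extension at the level of the \emph{un-normalized} kernel $w$, while $\mathfrak{S}$ and its (fractional) powers convert between the weighted, standard, and normalized flavors of Definition~\ref{def3.1}. So the proof naturally splits into one analytic step (Nyström extension, done with $\mathbf{T}$) followed by three algebraic steps (changing normalization with $\mathfrak{S}$).

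First I would establish the weighted bullet. Applying the integral formula \eqref{eq3.6} to the basis element $\mathbf{u}_j\in H_\mathbf{w}$ (set $c_j=1$ and all other coefficients zero) gives
\[
\mathbf{T}(\mathbf{u}_j)(x)=\frac{1}{\lambda_j}\int_\mathbf{X} w(x,\mathbf{y})\mathbf{v}_j(\mathbf{y})\,d\mu(\mathbf{y}),
\]
which is the expression that the statement denotes $\mathbf{T}(\mathbf{v}_j)(x)$. By Lemma~\ref{lm3.2}(2), $\mathbf{T}^*\mathbf{T}=I$ on $H_\mathbf{w}$, and by Lemma~\ref{lm3.2}(1) the adjoint $\mathbf{T}^*$ is restriction to $\mathbf{X}$; together these force $\mathbf{T}(\mathbf{u}_j)(\mathbf{x})=\mathbf{u}_j(\mathbf{x})$ for $\mathbf{x}\in\mathbf{X}$. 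Hence $\mathbf{T}(\mathbf{u}_j)$ is a genuine extension of the weighted training feature $\mathbf{u}_j=\sqrt{\mathbf{S}(\cdot)}\tilde\psi_j$ to all of $X$, which is exactly the weighted DR extension $\tilde u_j(x)$.

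Second, for the standard and normalized bullets I would use the elementary identities from Definition~\ref{def3.1}: $\tilde\phi_j=\tilde u_j/\tilde\phi_0$ and $\tilde v_j=\tilde u_j/\tilde\phi_0^{\,2}$, with $\tilde\phi_0(x)=\sqrt{S(x)/S}$. Up to the constant $\sqrt{S}$, division by $\tilde\phi_0$ is the multiplier $\mathfrak{S}^{-1/2}$ and division by $\tilde\phi_0^{\,2}$ is $\mathfrak{S}^{-1}$ in the convention of \eqref{eq4.0}. Since $\mathfrak{S}$ is an isometry between the three RKHS's $H_w$, $H_{\tilde k}$, $H_a$ and commutes with restriction to $\mathbf{X}$, applying the appropriate multiplier to the weighted extension $\tilde u_j$ produces a function on $X$ whose values on $\mathbf{X}$ are the corresponding training features, which is the asserted standard and normalized extension.

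The main obstacle is pure bookkeeping: one must keep the two mass densities $\mathbf{S}(\mathbf{x})=\int_\mathbf{X} w(\mathbf{x},\mathbf{y})d\mu(\mathbf{y})$ (on the training set) and $S(x)=\int_X w(x,y)d\mu(y)$ (on the whole set) carefully separated, since generically $\mathbf{S}(\mathbf{x})\ne S(\mathbf{x})$. The whole construction is engineered so that the Nyström step uses the un-normalized $w$ (and is therefore density-blind), and the density only enters through $\mathfrak{S}$ at the end; checking that the exponents of $S$ line up correctly between $\tilde\phi_0$, $\tilde u_j$, $\tilde\phi_j$, $\tilde v_j$, and the multiplier powers $\mathfrak{S}^{-1/2}$, $\mathfrak{S}^{-1}$ is the only delicate point of the argument.
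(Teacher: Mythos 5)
The first thing to note is that the paper does not actually prove this theorem: it is stated and immediately followed by Algorithm~1, so there is no argument of the author's to measure yours against, and the natural thing to do is exactly what you did --- verify the three bullets directly. Your first bullet is essentially right: by \eqref{eq3.6}, $\mathbf{T}(\mathbf{u}_j)(x)=\frac{1}{\lambda_j}\int_\mathbf{X} w(x,\mathbf{y})\mathbf{v}_j(\mathbf{y})\,d\mu(\mathbf{y})$ (the statement's ``$\mathbf{T}(\mathbf{v}_j)$'' is indeed best read this way, since $\mathbf{v}_j$ does not lie in the domain $H_\mathbf{w}$ of $\mathbf{T}$), and Lemma~\ref{lm3.1} together with $w=\mathbf{w}$ on $\mathbf{X}^2$ gives $\mathbf{T}(\mathbf{u}_j)|_{\mathbf{X}}=\mathbf{u}_j$, so $\mathbf{T}(\mathbf{u}_j)$ genuinely extends the weighted training feature; this matches step~9 of Algorithm~1.

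Your second and third bullets, however, contain two points that you flag but do not close. (i) Exponent bookkeeping: with $\mathfrak{S}(f)=\sqrt{S(\cdot)}\,f$ as in \eqref{eq4.0}, division by $\tilde\phi_0=\sqrt{S(\cdot)/S}$ is $\sqrt{S}\,\mathfrak{S}^{-1}$ and division by $\tilde\phi_0^{\,2}$ is $S\,\mathfrak{S}^{-2}$; your sentence ``division by $\tilde\phi_0$ is the multiplier $\mathfrak{S}^{-1/2}$ \ldots in the convention of \eqref{eq4.0}'' is false under that convention, where $\mathfrak{S}^{-1/2}$ would be multiplication by $S(\cdot)^{-1/4}$. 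The theorem's powers match Algorithm~1's factors $\tilde S^{-1/2}$ and $\tilde S^{-1}$ only if one reads $\mathfrak{S}^{-1/2}$ as ``the multiplier $S^{-1/2}$,'' i.e., with the exponent on the density rather than on the operator; you should commit to one reading instead of asserting both. (ii) The density mismatch you yourself identify as ``the only delicate point'' is left unresolved, and it is a real gap: on $\mathbf{X}$ the weighted extension restricts to $\mathbf{u}_j(\mathbf{x})=\sqrt{\mathbf{S}(\mathbf{x})}\,\tilde\psi_j(\mathbf{x})$, so dividing by $\sqrt{S(\mathbf{x})}$ yields $\sqrt{\mathbf{S}(\mathbf{x})/S(\mathbf{x})}\,\tilde\psi_j(\mathbf{x})$, which is \emph{not} the training standard feature $\tilde\psi_j(\mathbf{x})$ because generically $\mathbf{S}(\mathbf{x})\neq S(\mathbf{x})$. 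Hence the standard and normalized ``extensions'' are not extensions in the sense of Definition~\ref{def2.1}: they alter the values on $\mathbf{X}$ through the updated density (exactly as step~10 of Algorithm~1 does), and your concluding claim that their values on $\mathbf{X}$ ``are the corresponding training features'' is not correct as stated. To finish the proof you must either carry the updated density through explicitly and concede that exact agreement on $\mathbf{X}$ is lost, or add the hypothesis $S|_{\mathbf{X}}=\mathbf{S}$.
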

The algorithm for these extensions is presented in \textbf{Algorithm 1}.
\begin{algorithm}[!ht]
  \caption{Diffusion Maps Out-of-Sample Extension Algorithm}
    \begin{algorithmic}[1]
    \Require
      Training data set $\mathbf{X}=[\mathbf{x}_1, \mathbf{x}_2, \cdots, \mathbf{x}_N]$;
      testing data set  $\mathbf{Z}=[\mathbf{z}_1, \mathbf{x}_2, \cdots, \mathbf{z}_M]$;
      kernel parameter $\epsilon$ for creating the kernel $W(\mathbf{x},\mathbf{y})=\exp(\|\mathbf{x}-\mathbf{y}\|^2/\epsilon)$; and optional threshold $\eta>0$ for constructing sparse weight matrix:
      \begin{equation*}
      \mathbf{w}(\mathbf{x},\mathbf{y})=\begin{cases} W(\mathbf{x},\mathbf{y}), & W(\mathbf{x},\mathbf{y})\ge \eta,\\ 0 , & W(\mathbf{x},\mathbf{y})< \eta.\end{cases}
      \end{equation*}
    \Ensure
      (1) Out-of-sample extension $U(\mathbf{Z})$ for the weighted DR of $\mathbf{Z}$.
      (2) Out-of-sample extension $\Psi(\mathbf{Z})$ for the standard DR of $\mathbf{Z}$.
      (3) Out-of-sample extension $V(\mathbf{Z})$ for the normalized DR of $\mathbf{Z}$.
    \State Part I. Make DR on the train data set $\mathbf{X}$.
    \State Create kernel $\mathbf{w}(\mathbf{x}_i,\mathbf{x}_j)$ on $\mathbf{X}^2$ using $\epsilon$ and $\eta$.
    \State Compute the density functions: $S_i=\sum_{j=1}^N \mathbf{w}(\mathbf{x}_i,\mathbf{x}_j)$ and the total mass $S=\sum_{i=1}^N S_i$.
    \State Construct the kernel for Diffusion Maps: Set $K=[ \mathbf{k}(\mathbf{x}_i,\mathbf{x}_j)]_{i,j=1}^N$, where  $\mathbf{k}(\mathbf{x}_i,\mathbf{x}_j)=\frac{\mathbf{w}(\mathbf{x}_i,\mathbf{x}_j)}{\sqrt{S_i S_j}}$.
    \State Make the spectral decomposition of $\mathbf{k}$, according to \eqref{eq3.0}:
    $K = \Psi\Lambda \Psi^T$, where $\Psi\in\mathbb{R}^{N\times d}$.
    \State Part II. Make out-of sample extension.
    \State Set $S=\diag(S_1,\cdots, S_N)$ and compute $V=S^{-1/2}\Psi\Lambda^{1/2}$.
    \State Compute $K_\mathbf{Z}=[\mathbf{w}(\mathbf{x}_i,\mathbf{z}_j)]_{i,j=1}^{N,M}$.
    \State Compute the weighted extension DR according to \eqref{eq3.6}: $U(\mathbf{Z})= K^T_\mathbf{Z} V \Lambda^{-1}$.
    \State Compute the updated density function: $\tilde{S}_i =  S_i + \sum_{j=1}^M \mathbf{w}(\mathbf{x}_i,\mathbf{z}_j)$ and set $\tilde{S}=\diag(\tilde{S}_1,\cdots,\tilde{S}_N)$.
    \State Compute the standard extension DR for $\mathcal{Z}$: $\Psi(\mathbf{Z})= \tilde{S}^{-1/2}U(\mathbf{Z})$.
    \State Compute the normalized extension DR for $\mathbf{Z}$: $V(\mathbf{Z})= \tilde{S}^{-1}U(\mathbf{Z})$.
\end{algorithmic}
\end{algorithm}

\newpage

\end{document}